
\documentclass[nohyperref]{article}

\usepackage{microtype}
\usepackage{graphicx}
\usepackage{subfigure}
\usepackage{booktabs} 

\usepackage{hyperref}


\usepackage[accepted]{icml2022}


\usepackage{amsmath}
\usepackage{amssymb}
\usepackage{mathtools}
\usepackage{amsthm}

\usepackage[capitalize,noabbrev]{cleveref}

\theoremstyle{plain}
\newtheorem{theorem}{Theorem}[section]
\newtheorem{proposition}[theorem]{Proposition}

\theoremstyle{definition}
\newtheorem{definition}[theorem]{Definition}

\theoremstyle{remark}
\newtheorem{remark}[theorem]{Remark}

\usepackage[textsize=tiny]{todonotes}

\icmltitlerunning{Label-Free Explainability for Unsupervised Models}

\usepackage{amsmath}
\usepackage{amssymb}
\usepackage{nicefrac}
\usepackage{pifont}
\usepackage{transparent}

\usepackage{makecell}
\usepackage{adjustbox}
\usepackage{multirow}

\usepackage{placeins}
\usepackage{setspace}

\newcommand{\X}{\mathcal{X}}
\newcommand{\Y}{\mathcal{Y}}
\renewcommand{\H}{\mathcal{H}}
\newcommand{\Z}{\mathcal{Z}}
\newcommand{\D}{\mathcal{D}}
\newcommand{\R}{\mathbb{R}}
\newcommand{\N}{\mathbb{N}}
\newcommand{\f}{\boldsymbol{f}}
\newcommand{\m}{\boldsymbol{m}}
\newcommand{\x}{\boldsymbol{x}}
\newcommand{\z}{\boldsymbol{z}}
\newcommand{\y}{\boldsymbol{y}}
\newcommand{\h}{\boldsymbol{h}}
\newcommand{\param}{\boldsymbol{\theta}}
\newcommand{\params}{\param_*}
\newcommand{\paramh}{\hat{\param}}
\newcommand{\hessian}[1][\params]{\boldsymbol{H}_{#1}}
\newcommand{\shift}[1][n]{\boldsymbol{\delta}^{#1}}
\newcommand{\paramshift}[1][n]{\shift[#1] \param}
\newcommand{\modelshift}[1]{\shift{#1} \f_{\paramh}}
\newcommand{\latentshift}[1]{\shift{#1} \h}
\newcommand{\grad}[1][\param]{\boldsymbol{\nabla}_{#1}}
\newcommand{\norm}[2][\H]{\left\| #2 \right\|_{#1}}
\newcommand{\iprod}[2][\H]{\left< #2 \right>_{#1}}
\newcommand{\Dtrain}{\D_{\textrm{train}}}
\newcommand{\Dtest}{\D_{\textrm{test}}}
\newcommand{\noise}{\boldsymbol{\epsilon}}
\newcommand{\E}{\mathbb{E}}
\newcommand{\rx}{\boldsymbol{X}}
\renewcommand{\rm}{\boldsymbol{M}}
\newcommand{\vaemu}{\boldsymbol{\mu}}
\newcommand{\vaesigma}{\boldsymbol{\sigma}}
\newcommand{\relevant}[1]{#1_{\textrm{r}}}
\newcommand{\irrelevant}[1]{#1_{\textrm{irr}}}
\newcommand{\parapoint}[1]{$\blacktriangleright$~\textbf{#1}}
\newcommand{\T}{\boldsymbol{T}}
\newcommand{\partder}[2]{\frac{\partial #1}{\partial #2}}
\newcommand{\cmark}{\ding{51}}%
\newcommand{\xmark}{\transparent{0.2}\ding{55}}%
\newcommand{\rX}{\boldsymbol{X}}
\newcommand{\A}{\mathcal{A}}

\DeclareMathOperator*{\argmax}{arg\,max}
\DeclareMathOperator*{\argmin}{arg\,min}
\DeclareMathOperator*{\cov}{cov}
\DeclareMathOperator*{\rank}{rank}

\begin{document}

\twocolumn[
\icmltitle{Label-Free Explainability for Unsupervised Models}



\icmlsetsymbol{equal}{*}

\begin{icmlauthorlist}
	\icmlauthor{Jonathan Crabbé}{cam}
	\icmlauthor{Mihaela van der Schaar}{cam,atu,ucla}
\end{icmlauthorlist}

\icmlaffiliation{cam}{University of Cambridge}
\icmlaffiliation{ucla}{University of California Los Angeles}
\icmlaffiliation{atu}{The Alan Turing Institute}

\icmlcorrespondingauthor{Jonathan Crabbé}{jc2133@cam.ac.uk}
\icmlcorrespondingauthor{Mihaela van der Schaar}{mv472@cam.ac.uk}

\icmlkeywords{Machine Learning, ICML}

\vskip 0.3in
]



\printAffiliationsAndNotice{} 

\begin{abstract}
Unsupervised black-box models are challenging to interpret. Indeed, most existing explainability methods require labels to select which component(s) of the black-box's output to interpret. In the absence of labels, black-box outputs often are representation vectors whose components do not correspond to any meaningful quantity. Hence, choosing which component(s) to interpret in a label-free unsupervised/self-supervised setting is an important, yet unsolved problem. To bridge this gap in the literature, we introduce two crucial extensions of post-hoc explanation techniques: (1) label-free feature importance and (2) label-free example importance that respectively highlight influential features and training examples for a black-box to construct representations at inference time. We demonstrate that our extensions can be successfully implemented as simple wrappers around many existing feature and example importance methods. We illustrate the utility of our label-free explainability paradigm through a qualitative and quantitative comparison of representation spaces learned by various autoencoders trained on distinct unsupervised tasks. 
\end{abstract}

\section{Introduction} \label{sec:introduction}
Are machine learning models ready to be deployed in high-stakes applications? Recent years have witnessed a success of deep models on nontrivial tasks such as computer vision~\cite{Voulodimos2018}, natural language processing~\cite{Young2018} and scientific discovery~\cite{Jumper2021, Davies2021}. The success of these models comes at the cost of their complexity. Deep models typically involve millions to billions operations in order to turn their input data into a prediction. Since it is not possible for a human user to analyze each of these operations, the models appear as \emph{black-boxes}. When the deployment of these models impact critical areas, such as healthcare, finance or justice, their opacity appears as a major obstruction~\cite{Lipton2016, Ching2018, Tjoa2020}.  

\textbf{Post-Hoc Explainability.} As a response to this transparency problem, the field of \emph{explainable artificial intelligence} (XAI) received an increasing interest, see \cite{Adadi2018, BarredoArrieta2020, Das2020} for reviews. In order to retain the approximation power of deep models, many \emph{post-hoc explainability} methods were developed. These methods complement the predictions of black-box models with various explanations. In this way, models can be understood through the lens of explanations regardless their complexity. We focus on two types of such methods. \textbf{(1)~Feature Importance} explanations highlight crucial features for the black-box to issue a prediction. Examples include Saliency~\cite{Simonyan2013}, Lime~\cite{Ribeiro2016}, Integrated Gradients~\cite{Sundararajan2017}, Shap~\cite{Lundberg2017}, DeepLift~\cite{Shrikumar2017} and Perturbation Masks~\cite{Fong2017,Crabbe2021Dynamask}. \textbf{(2)~Example Importance} explanations highlight crucial training examples for the black-box to issue a prediction. Examples include Influence Function~\cite{Cook1982, Koh2017}, Deep K-Nearest Neighbours~\cite{Papernot2018}, TraceIn~\cite{Pruthi2020} and SimplEx~\cite{Crabbe2021}.

\textbf{Supervised Setting.} The previous works almost exclusively focus on explaining models obtained in a \emph{supervised setting}. In this setting, the black box $\f : \X \rightarrow \Y$ connects two meaningful spaces: the feature space $\X$ and the label space $\Y$. By meaningful, we mean that  each axis of these spaces corresponds to a quantity known by the model's user. This is illustrated at the top of Figure~\ref{fig:supervised_lf} with an idealized prostate cancer risk predictor. Each axis of $\Y$ corresponds to a clear label: the probability of mortality with and without treatment. We can explain the predictions of this model with the importance of features/examples to make a prediction on each individual axis $y_1, y_2$  or for a combination of those axes (e.g in this case $y_1 - y_2$ is associated to the treatment effect). The key point is that, in the supervised setting, the user knows the meaning of the black-box output they try to interpret. This is not always the case in machine learning.

\begin{figure*}[h] 
	\vskip 0in
	\begin{center}
		\centerline{\includegraphics[width=.65\linewidth]{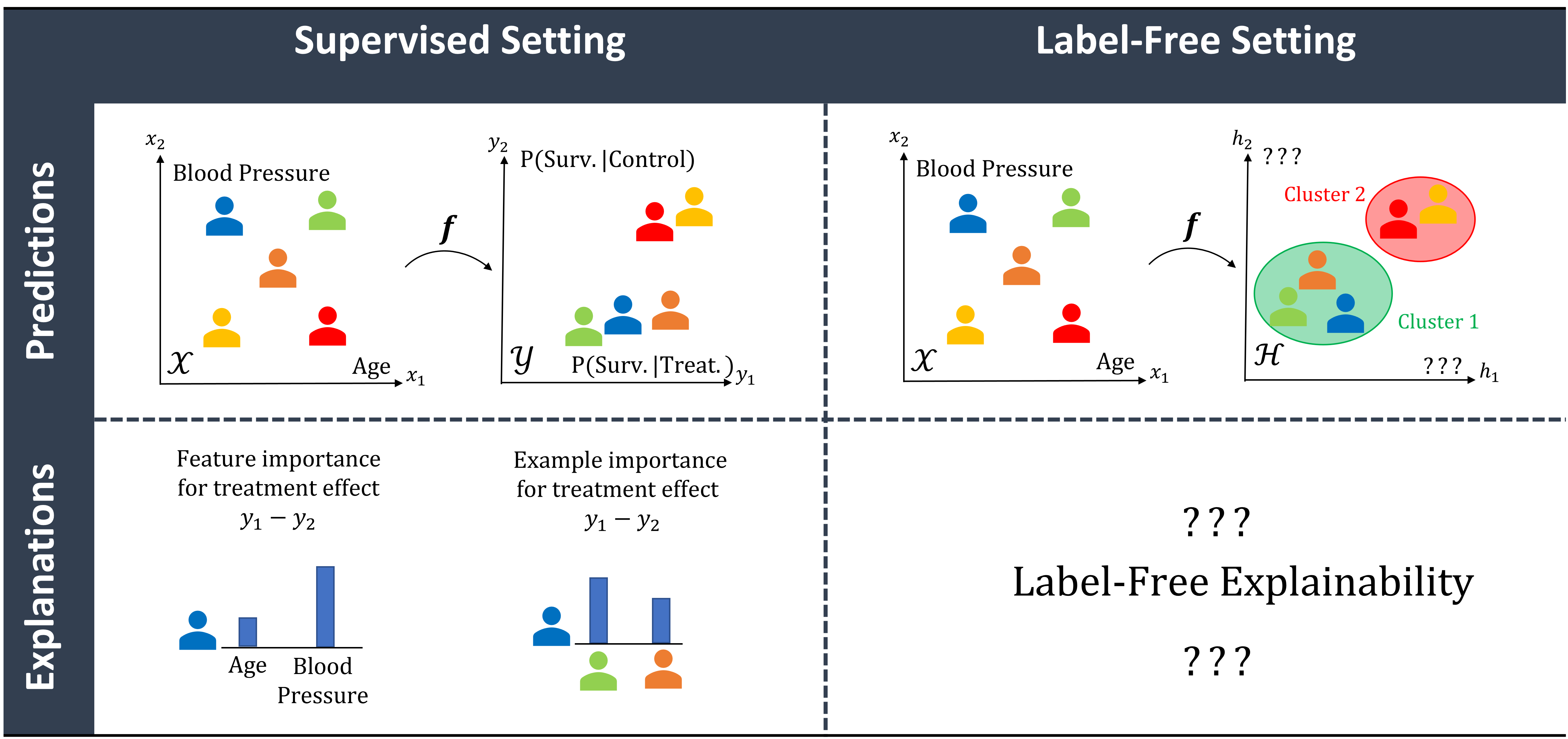}}
		\vspace{-0.1in}
		\caption{Supervised and Label-Free Settings.}
		\label{fig:supervised_lf}
	\end{center}
	\vskip -0.3in
\end{figure*}

\textbf{Label-Free Setting.} In the label-free setting, we are interested in black-boxes $\f : \X \rightarrow \H$ that connect a meaningful feature space $\X$ to a latent (or representation) space $\H$. Unlike the feature space $\X$ and the label space $\Y$, the axes of the latent space $\H$ do not correspond to labelled quantities known by the user. This is illustrated at the bottom of Figure~\ref{fig:supervised_lf}, where the examples are mapped in a representation space for clustering purposes. Unlike in the supervised setting, there is no obvious way for the user to choose an axis among $h_1, h_2$ to interpret. This distinction goes well beyond the philosophical consideration. As we show in Sections~\ref{sec:feature}~and~\ref{sec:example}, the aforementioned feature and example importance methods require labels to select an axis to interpret or evaluate a loss. The usage of these methods in the label-free setting hence requires a non-trivial extension.

\textbf{Motivation.} To illustrate the significance of the label-free extension of explainability, we outline 2 widespread setups where it is required. Note that these 2 setups are not mutually exclusive. \textbf{(1)~Unsupervised Learning:} Whenever we solve an unsupervised task such as clustering or density estimation, it is common to learn a function like $\f$ that projects the data onto a low dimensional representation space. Due to the unsupervised nature of the problem, any explanation of the latent space has to be done without label. \textbf{(2)~Self-Supervised Learning:} Even in a supervised learning setting, we have often more unlabelled than labelled data available. When this is the case, self-supervised learning preconises to leverage unlabelled data to solve an unsupervised pretext tasks such as predicting the missing part of a masked image~\cite{Jing2021}. This yields a representation $\f$ of the data that we can use as part of a model to solve the downstream task. If we want to interpret the model's representations of unlabelled examples, label-free explainability is a must. Let us now review related work.

\textbf{Related Work.} The majority of the relevant literature focuses on increasing the interpretability of representations spaces $\H$. Disentangled-VAEs constitute the best example~\cite{Higgins2017, Burgess2018, Chen2018, Sarhan2019}, we discuss them in more details in Section~\ref{subsec:vaes}. When it comes to post-hoc approaches for explainability of latent spaces, \emph{concept-based explanations}~\cite{Kim2017, Brocki2019} are central. These methods identify a dictionary between human concepts (like the presence of stripes on an image) and latent vectors. These methods are only partially relevant here since concepts are typically learned with labelled examples, although early works challenge this assumption~\cite{Ghorbani2019b}. When it comes to label-free feature importance, we note that \emph{Layer Relevance Propagation} (LRP)  permit to assign feature importance scores in the absence of labels~\cite{Bach2015, Eberle2022, Holzinger2022}. That said, these works have been restricted to specific settings (e.g. clustering and similarity models) and come with the natural limitations of LRP-based methods (e.g. no implementation invariance). Similarly, works discussing label-free example importance are also restricted to specific settings, like \citet{Kong2021} that adapt TraceIn to VAEs. 

\textbf{Contribution.}  \textbf{(1)~Label-Free Feature Importance:} We introduce a general framework to extend \emph{linear} feature importance methods to the label-free setting (Section~\ref{sec:feature}). Our extension is done by defining an auxiliary scalar function as a wrapper around the label-free black-box to interpret. This permits to compute feature importance in the label-free setting by retaining useful properties of the original methods without increasing their complexity. \textbf{(2)~Label-Free Example Importance:} We extend  example importance methods to the label-free setting (Section~\ref{sec:example}). In this work, we treat two types of example importance methods that we call \emph{loss-based} and \emph{representation-based}. For the former, the extension requires to specify a label-free loss and a set of relevant model parameters to differentiate the loss with. For the latter, the extension is straightforward. Our feature and example importance extensions are validated experimentally (Section~\ref{subsec:consistency_checks}) and their practical utility is demonstrated with a use case motivated by self-supervised learning (Section~\ref{subsec:pretext}).  \textbf{(3)~Challenging Interpretability of Disentangled Representations:} In testing the limits of our feature importance hypotheses with disentangled VAEs, we noticed that the interpretability of saliency maps associated to individual latent units seems unrelated to the strength of disentanglement between the units (Section~\ref{subsec:vaes}). We analyze this phenomenon both qualitatively and quantitatively. This insight could be the seed of future developments in interpretable representation learning. 

\section{Feature Importance} \label{sec:feature}
In this section, we present our approach to extend linear feature importance methods to the label-free setting. We start by reviewing the typical setup with label to grasp some useful insights for our extension. With these insights, the extension to the label-free regime is immediate. 

\subsection{Feature Importance with Labels}
We consider an input (or feature) space $\X \subset \R^{d_X}$ and an output (or label) space $\Y \subset \R^{d_Y}$, where $d_X \in \N^*$ and $d_Y \in \N^*$ are respectively the dimension of the input and output spaces. We are given a black-box model $\f : \X \rightarrow \Y$ from a hypothesis set\footnote{Typically neural networks.} $\A(\Y^{\X})$ mapping each input $\x \in \X$ to an output $\y = \f(\x) \in \Y$. Note that we use bold symbols to emphasize that those elements are typically vectors with more than one component ($d_X , d_Y > 1$). Feature importance methods explain the black-box prediction $\f(\x)$ by attributing an importance score $a_i(\f , \x)$ to each feature $x_i$ of $\x$ for\footnote{We denote by $[N]$ the positive integers from 1 to $N \in \N^*$.} $i \in [d_X]$. Note that feature importance methods require to select one component $f_j (\x) \in \R$ for some $j \in [d_Y]$ of the output in order to compute these scores: $a_i(\f , \x) \equiv a_i(f_j , \x)$. In a classification setting, $j$ typically corresponds to the ground-truth label (when it is known) $j = \argmax_{k \in [d_Y]} \left[y_k \right]$ or to the label with maximal predicted probability $j = \argmax_{k \in [d_Y]} \left[f_k(\x) \right]$.

We now suggest an alternative approach: combining the importance scores for each component of $\f$ by weighing them with the associated class probability: $b_i(\f , \x) \equiv \sum_{j=1}^{d_y} f_j(\x) \cdot a_i(f_j , \x)$. We note that, when a class probability $f_j(\x) \approx 1$ dominates, this reduces to the previous definition. However, when the class probabilities are balanced, this accounts for the contribution of each class. In the image classification context, this might be more appropriate than cherry-picking the saliency map of the appropriate class while disregarding the others~\cite{Rudin2019}. To the best of our knowledge, this approach has not been used in the literature. A likely reason for this is that this definition requires to compute $d_Y \cdot d_X$ importance scores per sample, which quickly becomes expensive as the number of classes grows. This limitation can easily be avoided when the importance scores are linear with respect to the black-box\footnote{Which is the case for most methods, including Shap.}. In this case, the weighted importance score can be rewritten as $b_i(\f , \x) = a_i (\sum_{j=1}^{d_Y} f_j(\x) \cdot f_j , \x )$. With this trick, we can compute the weighted importance score by only calling the auxiliary function $g_{\x}$ defined for all $\tilde{\x} \in \X$ as $g_{\x}(\tilde{\x}) = \sum_{j=1}^{d_Y} f_j(\x) \cdot f_j(\tilde{\x})$. We will now use a similar reasoning in the label-free setting.

\subsection{Label-Free Feature Importance}
We now turn our setting of interest.  We consider a latent (or representation) space $\H \subset \R^{d_H}$, where $d_H \in \N^*$ is the dimension of the latent space. We are given a black-box model $\f : \X \rightarrow \H$ from a hypothesis set $\A(\H^{\X})$ mapping each input $\x \in \X$ to a representation  $\h = \f(\x) \in \H$. As aforementioned, the latent space dimensions $h_j , j \in [d_H]$ are not related to labels with clear interpretations. Again, we would like to attribute an importance score $b_i(\f, \x)$ to each feature $x_i$ of $\x$ for $i \in [d_X]$. Ideally, this score should reflect the importance of feature $x_i$ in assigning the representation $\h = \f(\x)$. Unlike in the previous setting, we do not have a principled way of choosing a component $f_j$ for some $j \in [d_H]$. How can we compute importance scores?

We can simply mimic the approach described in the previous section. For a feature importance method, we use the weighted importance score $b_i(\f , \x) = a_i (\sum_{j=1}^{d_H} f_j(\x) \cdot f_j , \x)$. We stress that the individual components $f_j(\x)$ do not correspond to probabilities in this case. Does it still make sense to compute a sum weighted by these components? In most cases, it does. The components will typically correspond to a neuron's activation function~\cite{Glorot2011}. With typical activation functions such as ReLU or Sigmoid, inactive neurons will correspond to a vanishing component $f_j(\x) = 0$. From the above formula, this implies that these neurons will not contribute in the computation of $b_i(\f , \x)$. Neurons that are more activated, on the other hand, will contribute more to the weighted sum. By linearity of the feature importance method, this reasoning extends to linear combinations of neurons. We note that the weighted sum is a latent space inner product. This leads to the following definition.

\begin{definition}[Label-Free Feature Importance] \label{def:lf_feature_importance}
	Let $\f:\X \rightarrow \H$ be a black-box latent map and for all $i \in [d_X]$ a $a_i (\cdot , \cdot) : \A(\R^{\X}) \times \X  \rightarrow \R $ be a feature importance score linear w.r.t. its first argument. We define the \emph{label-free} feature importance as a score $b_i (\cdot , \cdot) : \A(\H^{\X}) \times \X  \rightarrow \R$:
	\begin{align} \label{eq:lf_feature_importance}
		&b_i(\f, \x) \equiv a_i \left( g_{\x} , \x \right) \\
		&g_{\x} : \X \rightarrow \R \text{  such that for all } 
		 \tilde{\x} \in \X: \nonumber\\
		 & g_{\x}(\tilde{\x}) = \iprod{\f(\x), \f(\tilde{\x})}, \label{eq:auxiliary_function}
	\end{align} 
where $\iprod{\cdot , \cdot}$ denotes an inner product for the space $\H$.
\end{definition}
\begin{remark}
	This definition gives a simple recipe to extend any linear feature importance method $a_i$ to the label-free setting. In practice, this is implemented by defining the auxiliary \emph{scalar} function $g_{\x}$ as a simple wrapper around the black-box function $\f$. We then feed $g_{\x}$ to any feature importance method $a_i$.
\end{remark}

Arguably one of the most important property shared by many feature importance methods is \emph{completeness}. Feature importance methods endowed with this property produce importance scores whose sum equals the black-box prediction up to a constant baseline $a_0 \in \R$: $\sum_{i=1}^{d_x} a_i (g, \x) = g(\x) - a_0$. This provides a meaningful connection between importance scores and black-box predictions. Typical examples of baselines are $a_0=0$ for Lime; the expected prediction $a_0 = \mathbb{E}_{\boldsymbol{X}} \left[g(\boldsymbol{X})\right]$ for Shap and a baseline prediction $a_0 = g(\bar{\x})$ for Integrated Gradients.  We show that our label-free feature importance scores are endowed with an analogous property in higher dimension. 

\begin{proposition}[Label-Free Completeness] \label{proposition:completeness}
	If the feature importance scores $a_i (\cdot , \cdot) : \A(\R^{\X}) \times \X  \rightarrow \R $ are linear and satisfy completeness, then the label-free importance scores $b_i(\f , \x) , i \in [d_X]$ sum to the black-box representation's norm $\norm{f(\x)}^2 = \iprod{\f(\x), \f(\x)} $ up to a constant baseline $b_0 \in \R$ for all $\x \in \X$:
	\begin{align} \label{eq:lf_completeness}
		\sum_{i=1}^{d_X} b_i(\f , \x) = \norm{\f(\x)}^2 - b_0.
	\end{align} 
\end{proposition}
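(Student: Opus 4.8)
The plan is to prove the identity by direct substitution, reducing everything to the completeness property of the underlying scalar method $a_i$ applied to the auxiliary function $g_{\x}$. The key observation is that $g_{\x} : \X \rightarrow \R$ is itself a legitimate scalar-valued function (differentiable whenever $\f$ is, so gradient-based methods remain applicable), hence the completeness property applies to it verbatim, with $\x$ playing the role of the fixed input at which scores are attributed.

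First I would unfold the defining relation~\eqref{eq:lf_feature_importance} and sum over the feature index, writing
\[
\sum_{i=1}^{d_X} b_i(\f, \x) = \sum_{i=1}^{d_X} a_i(g_{\x}, \x).
\]
Second, I would invoke completeness of the $a_i$, which for any admissible scalar function $g$ reads $\sum_{i=1}^{d_X} a_i(g, \x) = g(\x) - a_0$; applying it with $g = g_{\x}$ yields $\sum_{i=1}^{d_X} a_i(g_{\x}, \x) = g_{\x}(\x) - b_0$, where $b_0$ is the baseline attached to $g_{\x}$. Third, I would evaluate the auxiliary function at $\tilde{\x} = \x$ via~\eqref{eq:auxiliary_function}: $g_{\x}(\x) = \iprod{\f(\x), \f(\x)} = \norm{\f(\x)}^2$. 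Chaining these three steps gives~\eqref{eq:lf_completeness}. Note that the core argument needs only completeness; the bilinearity of $\iprod{\cdot,\cdot}$ and the differentiability of $\f$ are what guarantee that $g_{\x}$ is an admissible argument.

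To make the linearity hypothesis do visible work and to pin down $b_0$ explicitly, I would, for the canonical inner product, expand $g_{\x} = \sum_{j=1}^{d_H} f_j(\x) \cdot f_j$ as a fixed linear combination of the components $f_j$, then use linearity of $a_i$ in its first argument to get $b_i(\f,\x) = \sum_{j=1}^{d_H} f_j(\x)\, a_i(f_j, \x)$. Summing over $i$ and applying completeness componentwise, $\sum_{i} a_i(f_j,\x) = f_j(\x) - a_{0,j}$, identifies $b_0 = \sum_{j=1}^{d_H} f_j(\x)\, a_{0,j}$, while the diagonal contributions $\sum_j f_j(\x)^2$ reassemble into $\norm{\f(\x)}^2$. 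This recovers the weighted-sum reading of the label-free scores and matches the direct computation.

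The main thing to be careful about is the word \emph{constant} in the statement for $b_0$. For Lime the baseline vanishes, so $b_0 = 0$ uniformly. For Shap and Integrated Gradients, however, the baseline is computed from $g_{\x}$ itself, e.g.\ $b_0 = \E_{\rX}[g_{\x}(\rX)] = \iprod{\f(\x), \E_{\rX}[\f(\rX)]}$ for Shap, which does depend on $\x$ through $\f(\x)$. I would therefore read ``constant'' as ``a single scalar offset independent of the feature index $i$'' rather than independent of $\x$, and I would state this dependence explicitly so the completeness analogy is not overclaimed. Apart from this interpretive point, the argument is a routine chaining of definition, completeness, and bilinearity of the inner product, with no genuine technical obstacle.
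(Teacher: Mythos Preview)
Your proof is correct and follows essentially the same route as the paper: unfold the definition of $b_i$, apply completeness of $a_i$ to the scalar auxiliary $g_{\x}$, and evaluate $g_{\x}(\x) = \norm{\f(\x)}^2$. Your additional remarks on the componentwise expansion via linearity and on the $\x$-dependence of $b_0$ go beyond what the paper's proof records, but they are accurate and clarify points the paper leaves implicit.
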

\begin{proof}
	The proof is provided in Appendix~\ref{appendix:properties}.
\end{proof}

This property is more general than its LRP counterpart (Proposition 1 in \cite{Eberle2022}) that holds only for neural network with biases set to zero. Furthermore, in Appendix~\ref{subappendix:invariance}, we demonstrate that our label-free extension of feature importance verifies crucial invariance properties with respect to latent space symmetries.

\section{Example Importance} \label{sec:example}
In this section, we present our approach to extend example importance methods to the label-free setting. Since example importance methods are harder to unify, the structure of this section differs from the previous one. We split the example importance methods in two families: the loss-based and representation-based methods. The extension to the label-free setting works differently for these two families. Hence, we treat them in two distinct subsections. In both cases, we work with an input space $\X$ and a latent space $\H$. We are given a training set of $N \in \N^*$ examples $\Dtrain = \left\{ \x^n   \mid n \in [N] \right\}$. This training set is used to fit a black-box latent map $\f : \X \rightarrow \H$. We want to assign an example importance score $c^n(\f, \x)$ to each training example $\x^n \in \Dtrain$ for the black-box $\f$ to build a representation $\f(\x) \in \H$ of a test example $\x \in \X$. Note that we use upper indices for examples in contrast with the lower indices for the features. Hence, $x^n_i$ denotes feature $i$ of training example $n$. Similarly, $c^n$ denotes an example importance in contrast with $b_i$ that is used for feature importance.

\subsection{Loss-Based Example Importance} \label{subsec:loss_based}
\textbf{Supervised Setting.} Loss-based example importance methods assign a score to each training example $\x^n$ by simulating the effect of their removal on the loss for a test example. To make this more explicit, we denote by $\z$ the data of an example that is required to evaluate the loss. In a supervised setting, this typically correspond to a couple $\z = (\x , \y)$ with an input $\x \in \X$ and a label $\y \in \Y$. Similarly, the training set is of the form $\Dtrain = \left\{ \z^n   \mid n \in [N] \right\}$. This training set is used to fit a black-box model $\f_{\theta}$ parametrized by $\param \in \Theta \subset \R^P$, where $P \in \N^*$ is the number of model parameters. Training is done by solving $\params = \argmin_{\param \in \Theta} \sum_{n=1}^N  L(\z^n, \param)$ with the loss $L: \Z \times \Theta \rightarrow \R$. This yields a model $\f_{\params}$. If we remove an example $\z^n$ from $\Dtrain$, the optimization problem turns into $\params^{-n} = \argmin_{\param \in \Theta} \sum_{m=1, m \neq n}^N  L(\z^m, \param)$. This creates a parameter shift $\paramshift \equiv \params^{-n}  - \params$. This parameter shift, in turns, impacts the loss $L(\z, \param)$ on a test example $\z$. This shift is reflected by the quantity $\shift_{\param} L(\z, \params) \equiv L(\z, \params^{-n}) - L(\z , \params)$. This loss shift permits to draw a distinction between proponents ( $\z^n$ that decrease the loss: $\shift_{\param} L(\z, \params) > 0$) and opponents ($\z^n$ that increase the loss: $\shift_{\param} L(\z, \params) < 0$). Hence, it provides a meaningful measure of example importance. In order to estimate the loss shift without retraining the model, \citet{Koh2017} propose to evaluate the influence function:
\begin{align*}
	\shift_{\param} L(\z, \params) \approx \frac{1}{N} \iprod[\Theta]{\grad L(\z, \params) ,  \hessian^{-1} \grad L(\z^n, \params) }, 
\end{align*}
where $\iprod[\Theta]{\cdot, \cdot}$ is an inner product on the parameter space $\Theta$ and $\hessian \equiv \sum_{n=1}^N N^{-1} \grad^2 L(\z^n, \params) \in \R^{P \times P}$ is the training loss Hessian matrix. Similarly, \citet{Pruthi2020} propose to use checkpoints during the model training to evaluate the loss shift:
\begin{align*}
	\shift_{\param} L(\z, \params) \approx \sum_{t=1}^T \eta_t \iprod[\Theta]{\grad L(\z, \param_t) ,  \grad L(\z^n, \param_t) }, 
\end{align*}
where $\eta_t$ and $\param_t$ are respectively the learning rate and the model's parameters at checkpoint $t \in [T]$, $T \in \N^*$ is the total number of checkpoints. Similar approaches building on the theory of Shapley values exist~\cite{Ghorbani2019, Ghorbani2020}.

\textbf{Label-free Setting.} We now turn to the label-free setting. In this case, we train our model with a label-free loss $L: \X \times \Theta \rightarrow \R$. Is it enough to drop the label and fix $\z = \x$ in all the above expressions? Most of the time, no. It is important to notice that the latent map $\f : \X \rightarrow \H$ that we wish to interpret is not necessarily equal to the model that we use to evaluate the loss $L(\z , \param)$. To understand, it is instructive to consider a concrete example. Let us assume that we are in a self-supervised setting and that we train an autoencoder $\f_d \circ \f_e: \X \rightarrow \X$ that consists in an encoder $\f_e : \X \rightarrow \H$ and a decoder $\f_d : \H \rightarrow \X$ on a pretext task such as denoising. At the end of the pretraining, we typically keep the encoder $\f_e$ and drop the decoder $\f_d$. If we want to compute the loss-based example importance, we are facing a difficulty. On the one hand, we would like to draw interpretations that rely solely on the encoder $\f_e$ that is going to be used in a downstream task. On the other hand, loss-based example importance scores are computed with the autoencoder $\f_d \circ \f_e$ that also involves the irrelevant decoder $\f_d$. Whenever the loss evaluation involves more than the black-box latent map $\f$ we wish to interpret, replacing $\z$ by $\x$ in the above expressions is therefore insufficient.

To provide a more satisfactory solution, we split the parameter space $\Theta = \relevant{\Theta} \times \irrelevant{\Theta}$ into a relevant component $\relevant{\Theta} \subseteq \R^{\relevant{P}}$ and an irrelevant component $\irrelevant{\Theta} \subseteq \R^{\irrelevant{P}}$. The black-box $\f$ to interpret is parametrized only by the relevant parameters $\relevant{\param}$ and can be denoted $\f_{\relevant{\param}}$. It typically corresponds to an encoder, as in the previous example. Concretely, we are interested in isolating the part of loss shift $	\shift_{\param} L(\z, \params)$ caused by the variation of these relevant parameters. We note that the above estimators for this quantity involve gradients with respect to the parameters $\param$. We decompose the overall parameters gradient in terms of relevant and irrelevant parameters gradients: $\grad[\param] = \grad[\relevant{\param}] + \grad[\irrelevant{\param}]$. Ignoring the variation of irrelevant parameters is equivalent to setting $\grad[\irrelevant{\param}] = 0$ in the above expressions. This is trivially equivalent to the replacement of $\shift_{\param}$ by $\shift_{\relevant{\param}}$. This motivates the following definition.  

\begin{definition}[Label-Free Loss-Based Importance]
	Let $f_{\relevant{\param}}:\X \rightarrow \H$ be a black-box latent map trained to minimize a loss $L: \X \times \Theta \rightarrow \R$ on a training set $\Dtrain = \left\{ \x^n   \mid n \in [N] \right\}$. To measure the impact of removing example $\x^n$ from $\Dtrain$ with $n \in [N]$, we define the \emph{Label-Free Loss-Based Example Importance} as a score $c^n(\cdot, \cdot): \A(\H^{\X}) \times \X \rightarrow \R$ such that
	\begin{align}
		c^n(\f_{\relevant{\param}}, \x) = \shift_{\relevant{\param}} L(\x, \param_*).
	\end{align} 
\end{definition}
\begin{remark}
	This definition gives a simple recipe to extend any loss-based example importance method to the label-free setting. In practice, this is implemented by using the unsupervised loss $L$ trained to fit the model and differentiating $L$ with respect to parameters of the encoder we wish to interpret.  
\end{remark}

We stress that the loss depends only on a \emph{single} input example $\x$. This is obviously not the case for \emph{contrastive losses} that involve \emph{pairs} of input examples~\cite{Chen2020}. There is not obvious extension of loss-based example importance in this setting. We will now present another type of example importance method that better extends to contrastive learning. 

\subsection{Representation-Based Example Importance} \label{subsec:representation_based}

\textbf{Supervised Setting.} Although representation-based example importance methods are introduced in a supervised context, their extension to the label-free setting is straightforward. These methods assign a score to each training example $\x^n$ by analyzing the latent representations of these examples. To make this more concrete, we start with a typical supervised setting. Consider a model $\f_l \circ \f_e : \X \rightarrow \Y$, where $\f_e : \X \rightarrow \H$ maps inputs to latent representations and $\f_l : \H \rightarrow \Y$ maps representations to labels. In the case of deep neural networks, the representation space typically corresponds to the output of an intermediate layer. We would like to see how the representation map relates a test example $\x \in \X$ to the training set examples. This can be done by mapping the training set inputs into the representation space $\f_e (\Dtrain) = \{ \f_e(\x^n) \mid n \in [N] \}$. To quantify the affinity between $\x$ and the training set examples, we attempt a reconstruction of $\f_e (\x)$ with training representations from $\f_e(\Dtrain)$: $\f_e(\x) \approx \sum_{n=1}^N w^n(\x) \cdot \f_e(\x^n)$. Following~\cite{Papernot2018}, the most obvious approach to define weights $w^n(\x)$ is to identify the indices $\textrm{KNN}(\x) \subset [N]$ of the $K$ nearest neighbours (DKNN) of $\f_e(\x)$ in $\f_e(\Dtrain)$ and weigh them according to a Kernel function $\kappa: \H^2 \rightarrow \R^+$:
\begin{align} \label{eq:dknn}
	w^n(\x) = \boldsymbol{1}\left[n \in \textrm{KNN}(\x)\right] \cdot \kappa \left[ \f_e(\x^n) , \f_e(\x) \right], 
\end{align}    
where $\boldsymbol{1}$ denotes the indicator function. Similarly, \citet{Crabbe2021} propose to learn the weights by solving
\begin{align} \label{eq:simplex}
	\boldsymbol{w}(\x) = \argmin_{\boldsymbol{\lambda} \in [0,1]^N} \norm{\f_e(\x) - \sum_{n=1}^N \lambda^n \f_e(\x^n)} 
\end{align}
such that $\sum_{n=1}^N \lambda^n = 1$. Similar approaches building on the representer theorem exist~\cite{Yeh2018}.

\textbf{Label-free Setting.} We now turn to the label-free setting. The above discussion remains valid if we replace the supervised representation map $\f_e$ by an unsupervised representation map $\f$. In short, we can take $c^n = w^n$ without any additional work for representation-based methods. A major advantage of representation-based methods over loss-based methods is that they only require latent representations. Therefore, they naturally extend to representation spaces learned in contrastive learning. Moreover, we argue in Appendix~\ref{appendix:properties} that only representation-based methods are invariant to latent space symmetries.

\section{Experiments} \label{sec:experiments}
In this section, we conduct quantitative evaluations of the label-free extensions of various explanation methods. We start with simple consistency checks to ensure that these methods provide sensible explanations for unsupervised models. Then, we demonstrate how our label-free explanation paradigm makes it possible to compare representations learned from different pretext tasks. Finally, we challenge Definition~\ref{def:lf_feature_importance} by studying saliency maps of VAEs. A more detailed description of each experiment can be found in Appendix~\ref{appendix:experiments_details}. The implementation is available online\footnote{\url{https://github.com/JonathanCrabbe/Label-Free-XAI}} \footnote{\url{https://github.com/vanderschaarlab/Label-Free-XAI}}.

\subsection{Consistency Checks} \label{subsec:consistency_checks}

\begin{figure*}[h]
	\vspace{-0.1in}
	\centering
	\subfigure[MNIST]{
		\begin{minipage}[t]{.32\linewidth}
			\centering
			\includegraphics[width=1\linewidth]{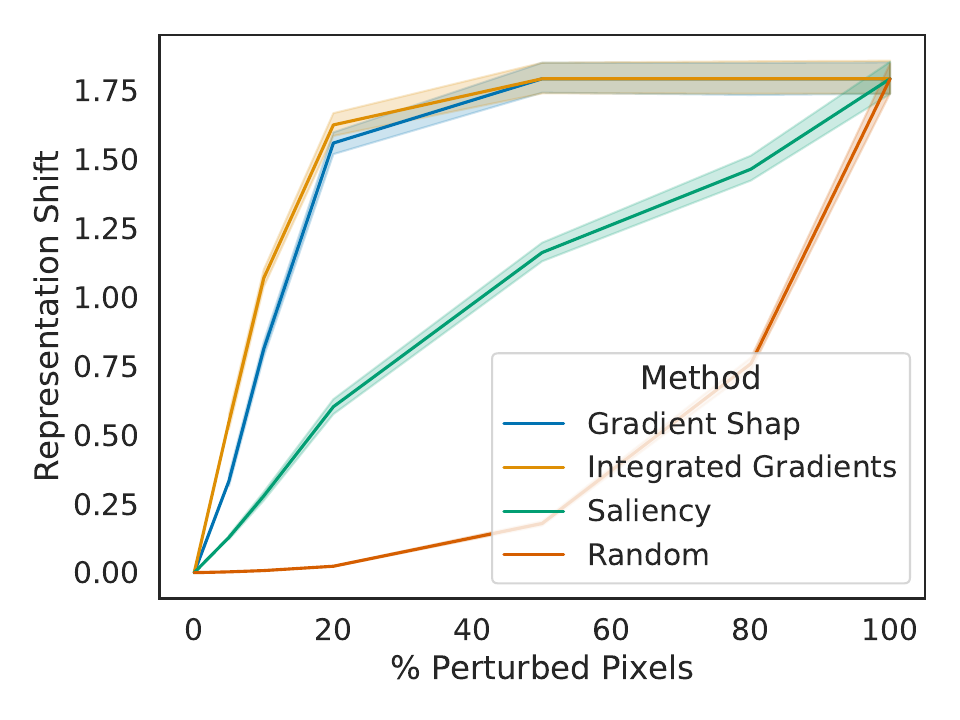}
		\end{minipage}%
	}%
	\subfigure[ECG5000]{
		\begin{minipage}[t]{.32\linewidth}
			\centering
			\includegraphics[width=1\linewidth]{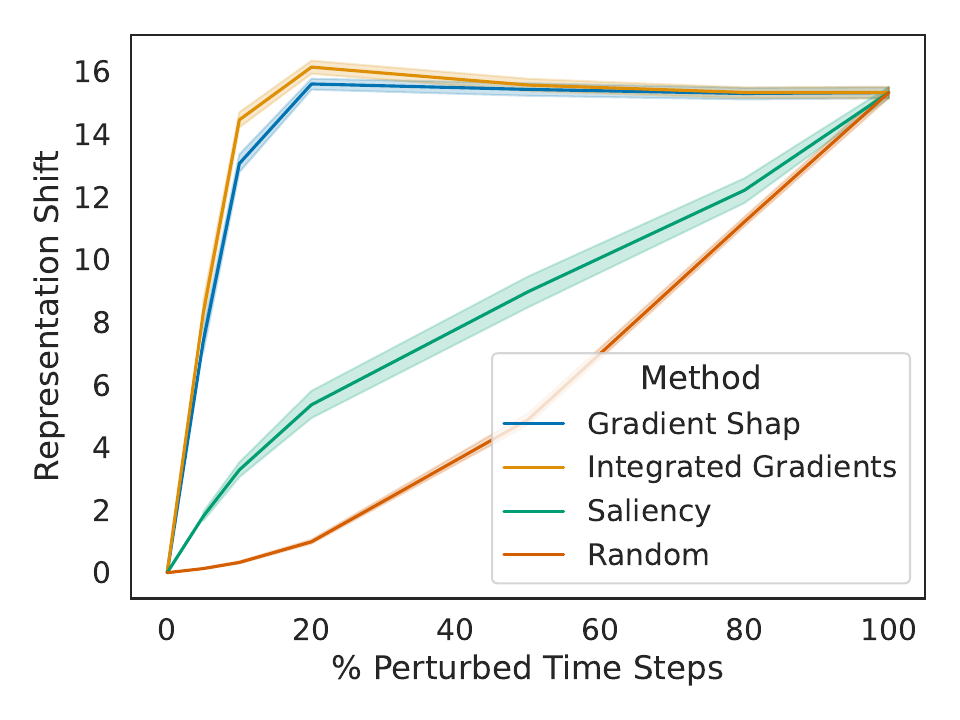}
		\end{minipage}%
	}%
	\subfigure[CIFAR-10]{
		\begin{minipage}[t]{.32\linewidth}
			\centering
			\includegraphics[width=1\linewidth]{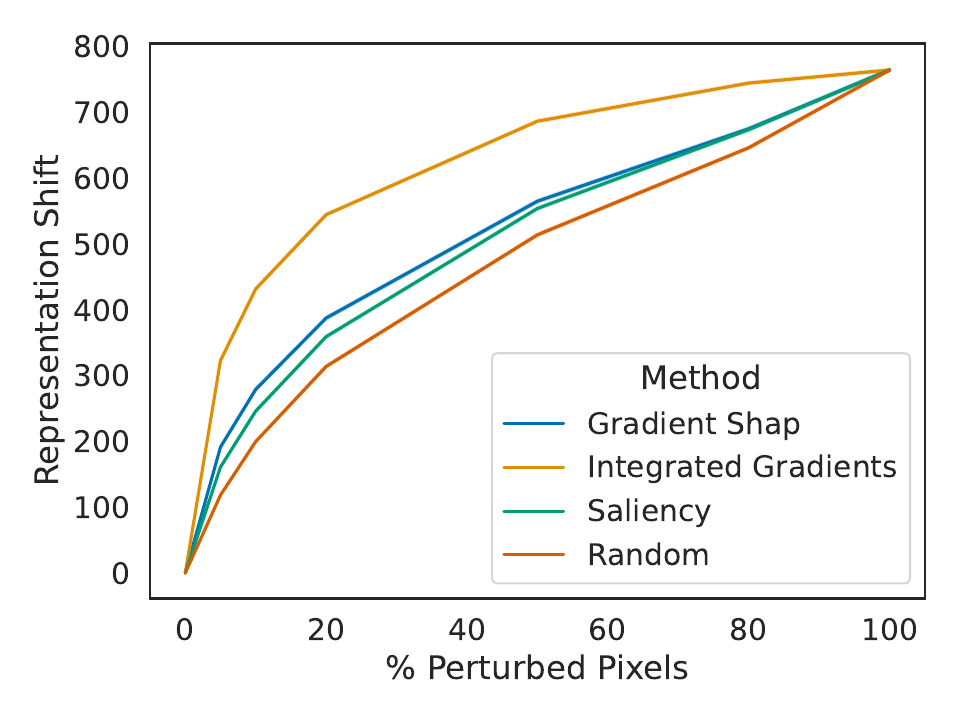}
		\end{minipage}%
	}%
	\vspace{-0.1in}
	\caption{Consistency check for label-free feature importance (average and $95 \%$ confidence interval).}
	\label{fig:cons_features}
	\vskip -0.2in
\end{figure*}

\begin{figure*}[h]
	\vspace{-0.05in}
	\centering
	\subfigure[MNIST]{
		\begin{minipage}[t]{.32\linewidth}
			\centering
			\includegraphics[width=1\linewidth]{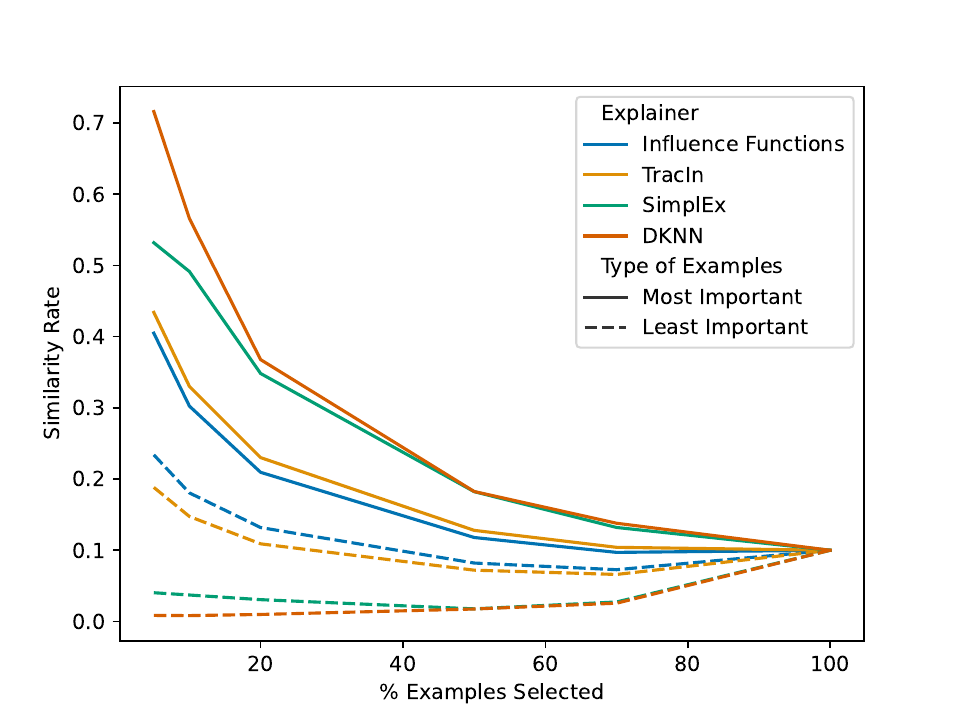}
		\end{minipage}%
	}%
	\subfigure[ECG5000]{
		\begin{minipage}[t]{.32\linewidth}
			\centering
			\includegraphics[width=1\linewidth]{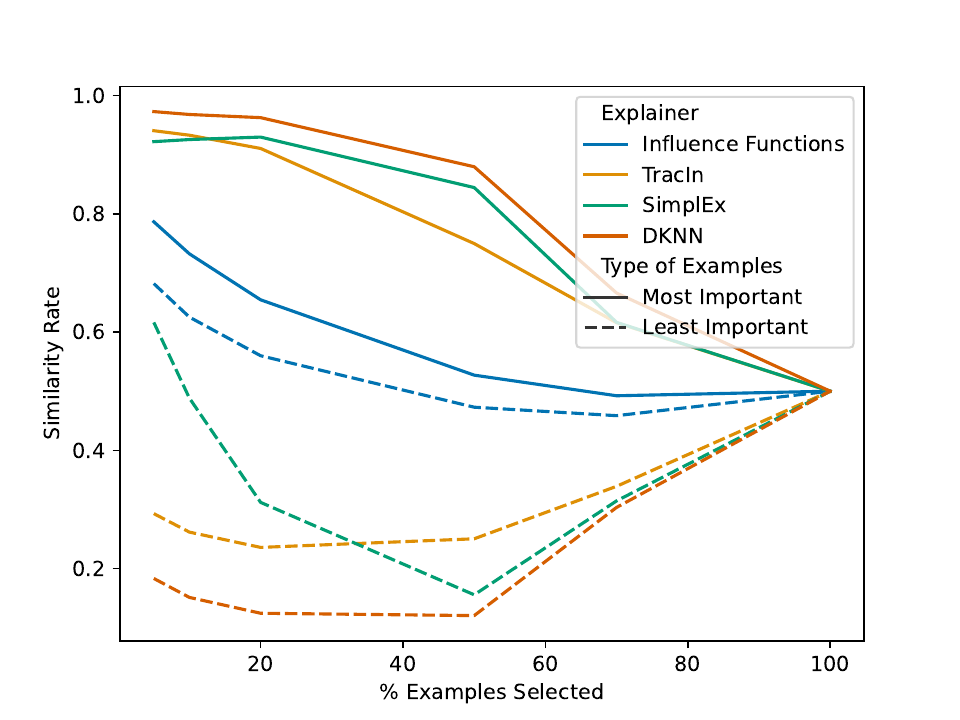}
		\end{minipage}%
	}%
	\subfigure[CIFAR-10]{
		\begin{minipage}[t]{.32\linewidth}
			\centering
			\includegraphics[width=1\linewidth]{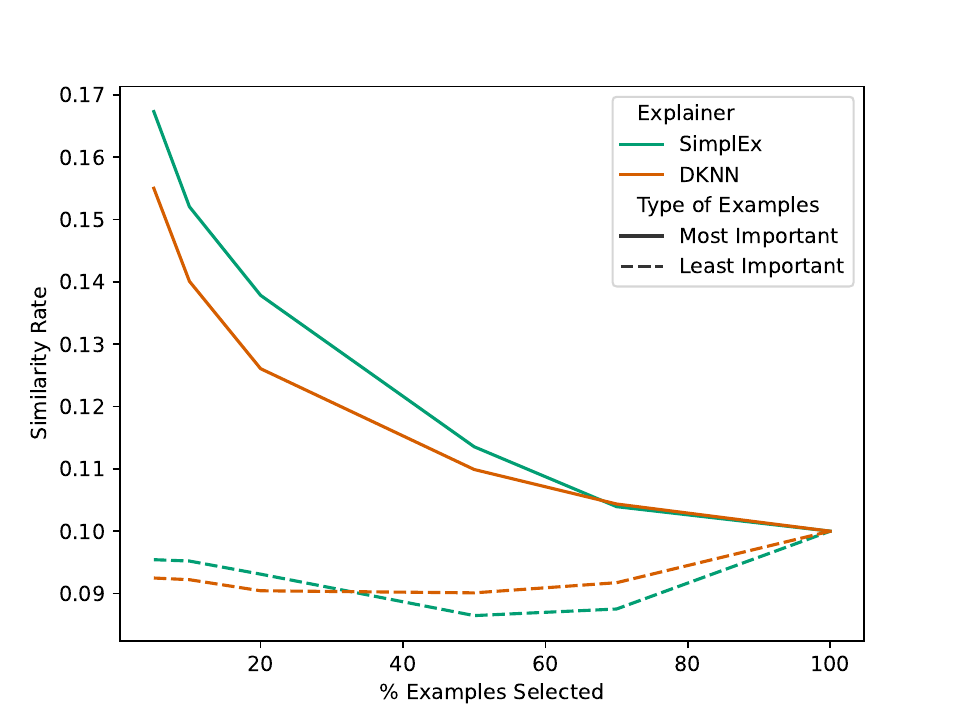}
		\end{minipage}%
	}%
	\vspace{-0.1in}
	\caption{Consistency check for label-free example importance (only representation-based methods apply to SimCLR).}
	\label{fig:cons_examples}
	\vspace{-0.1in}
\end{figure*}

We would like to asses whether the approaches described in Sections~\ref{sec:feature}~and~\ref{sec:example} provide a sensible way to extend feature and example importance scores to the unsupervised setting. 

\textbf{Setup.}  We fit 3 models on 3 datasets: a denoising autoencoder CNN on the MNIST image dataset~\cite{LeCun1998}, a LSTM reconstruction autoencoder on the ECG5000 time series dataset~\cite{Goldberger2000} and a SimCLR~\cite{Chen2020} neural network with a ResNet-18~\cite{He2015} backbone on the CIFAR-10 image dataset~\cite{Krizhevsky2009}. We extract an encoder $\f_e$ to interpret from each model. \parapoint{Feature Importance:} We compute the label-free feature importance $b_i(\f_e, \x)$ of each feature (pixel/time step) $x_i$ for building the latent representation of the test example $\x \in \Dtest$. To verify that high-scoring features are salient, we use an approach analogous to pixel-flipping~\cite{Montavon2018}: we mask the $M$ most important features with a mask $\m \in \{0,1\}^{d_X}$. We measure the latent shift $\norm{\f_e(\x) - \f_e(\m \odot \x + (1 - \m) \odot \bar{\x})}$ induced by replacing the most important features with a baseline $\bar{\x}$, where $\odot$ denotes the Hadamard product. We expect this shift to increase with the importance of masked features. We report the average shift over the testing set for several values of $M$ and feature importance methods in Figure~\ref{fig:cons_features}. \parapoint{Example Importance:} We sample $N=1000$ training examples $\x^n \in \Dtrain, n \in [N]$ without replacement. We compute the importance score $c^n(\f_e, \x)$ of each training example $\x^n$ for predicting the latent representation of the test images $\x \in \Dtest$. To verify that high scoring examples are salient, we use an approach analogous to~\citet{Kong2021}. We select the $M$ most important training examples $\x^{n_1}, \dots , \x^{n_M}$. We compare their ground truth label $y^{n_1}, \dots , y^{n_M}$ to the label $y$ of $\x$.  We compute the similarity rates $\nicefrac{\sum_{m=1}^M \delta_{y,y^{n_m}}}{M}$, where $\delta$ denotes the Kronecker delta. We reproduce the above steps for the $M$ least important examples. If the encoder meaningfully represents the data, we expect the similarity rate of the most important examples to be higher than for the least important examples. We report the distribution of similarity rates across 1,000 test examples for various values of $M$ and example importance methods in Figure~\ref{fig:cons_examples}.

\textbf{Results.} \parapoint{Feature Importance:} Label-free feature importance methods exhibit a similar behaviour: the latent shift increases sharply as we perturb the few most important pixels. This increase decelerates when we start perturbing pixels that are less relevant. Furthermore, selecting the perturbed pixels according to the various importance scores $b_i(\f_e, \x)$ yields latent shifts that are significantly larger than the shift induced by perturbing random pixels. Label-free Integrated Gradients outperform other methods for each model. These observations confirm that the label-free importance scores $b_i(\f_e, \x)$ allow us to identify the features that are the most relevant for the encoder $\f_e$ to build a latent representation for the example $\x$. \parapoint{Example Importance:} For all example importance method, we observe that the similarity rate is substantially higher among the most similar examples than among the least similar examples. This observation confirms that the label-free importance scores $c^n(\f_e, \x)$ allow us to identify training examples that are related to the test example we wish to explain. Representation-based methods usually outperform loss-based methods. In this case, the verification also validates our models since no label was used during training.

\subsection{Use Case: Comparing the Representations Learned with Different Pretext Tasks} \label{subsec:pretext}

\begin{figure*}[h]
	\vspace{-0.05in}
	\centering
	\subfigure[Top Examples]{
		\begin{minipage}[t]{.37\linewidth}
			\centering
			\includegraphics[width=1\linewidth]{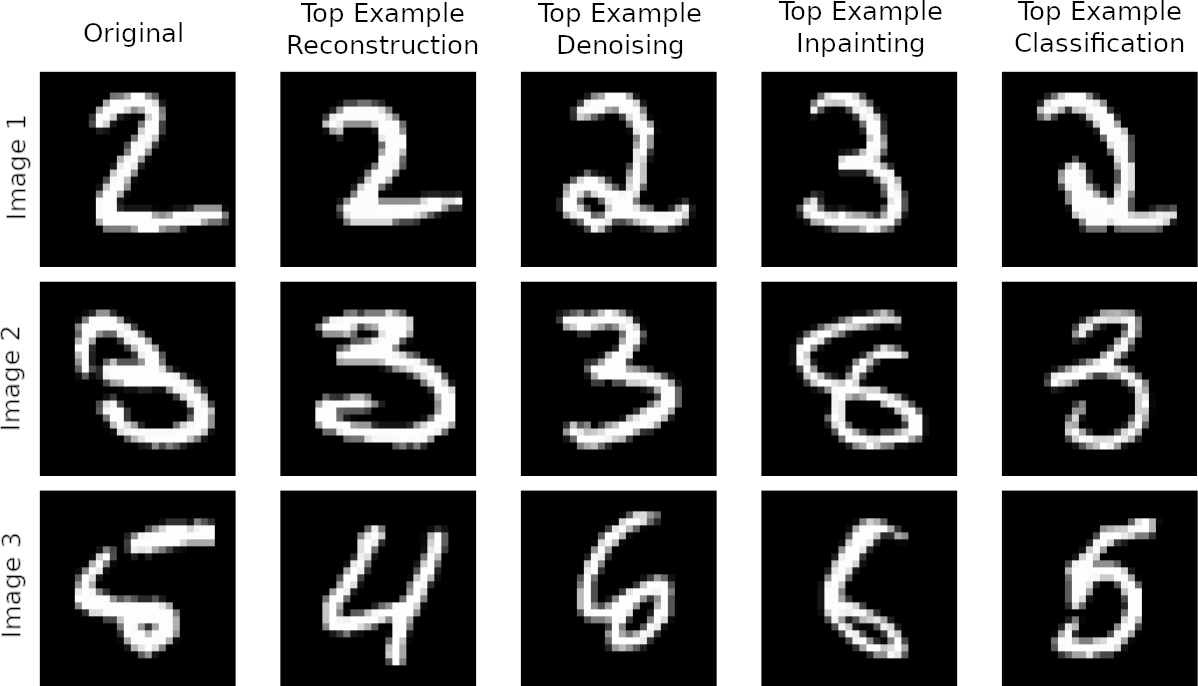}
		\end{minipage}%
	}%
	\hspace{5px}
	\subfigure[Saliency Maps]{
		\begin{minipage}[t]{.28\linewidth}
			\centering
			\includegraphics[width=1\linewidth]{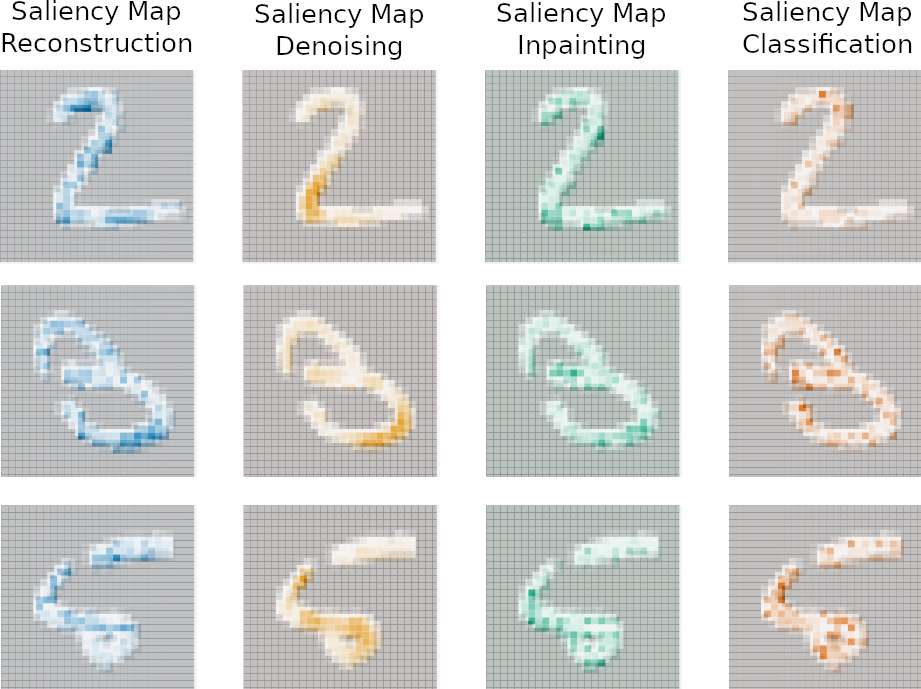}
		\end{minipage}%
	}%
	\vspace{-0.1in}
	\caption{Explanations for different pretext tasks.}
	\label{fig:examples_pretext}
	\vspace{-0.1in}
\end{figure*}

In a self-supervised learning setting, many unsupervised pretext task can be chosen to learn a meaningful representation of the data. How do the representations from different pretext tasks compare to each other? We show that label-free explainability permits to answer this question through qualitative and quantitative comparisons of representations.

\textbf{Setup.} We work with the MNIST denoising autoencoder from Section~\ref{subsec:consistency_checks}. Besides denoising, we consider 2 additional pretext tasks along with their autoencoders: reconstruction and inpainting~\cite{Pandit2019}.  Finally, we use the labelled training set to fit a MNIST classifier and extract the representations from the penultimate layer. We are interested in comparing the representation spaces learned by the encoder $\f_e$ for the various tasks. \parapoint{Feature Importance:} For each encoder $\f_e$, we use our label-free  Gradient Shap to produce saliency maps $b_i(\f_e , \x)$ for the test images $\x \in \Dtest$.  To compare the saliency maps obtained by different models, a common approach is to compute their Pearson correlation coefficient~\cite{LeMeur2013}. We report the average Pearson coefficients between the encoders across 5 runs in Table~\ref{tab:pretext_features_pearson}. \parapoint{Example Importance:} For each encoder $\f_e$, we use our label-free  Deep-KNN to produce example importance $c^n(\f_e , \x)$ of 1,000 training examples $\x^n \in \Dtrain$ for 1,000 test images $\x \in \Dtest$. Again, we use the Pearson correlation coefficient to compare different encoders. We report the average Pearson coefficients between the encoders across 5 runs in Table~\ref{tab:pretext_examples_pearson}.

\textbf{Results.} \parapoint{Not all representations are created equal.} For saliency maps, the Pearson correlation coefficients range from $.31$ to $.44$. This corresponds to moderate positive correlations. A good baseline to interpret these results is provided by ~\citet{Ouerhani2003}: the correlation between the fixation of two human subjects (human saliency maps) are typically in the same range. Hence, two encoders trained on distinct pretext tasks pay attention to different parts of the image like two separate human subjects typically do.
For example importance scores, the Pearson correlation coefficients range from $.06$ to $.12$, which correspond to weak correlations. For both explanation types, these quantitative results strongly suggest that distinct pretext tasks do not yield interchangeable representations. \parapoint{What makes classification special?} For saliency maps, the autoencoder-classifier correlations are comparable to those of the autoencoder-autoencoder. This shows that using labels creates a shift in the saliency maps comparable to changing the unsupervised pretext task. Hence, classification does not appear as a privileged task in terms of feature importance. Things are different for example importance: the autoencoder-classifier correlations are substantially lower than those of the autoencoder-autoencoder. One likely reason is that the classifier groups examples together according to an external label that is unknown to the autoencoders. 

\begin{table}[h]
	\vskip -0.1in
	\caption{Pearson correlation for saliency maps (avg +/- std).}
	\vspace{-0.1in}
	\label{tab:pretext_features_pearson}
	\begin{center}
		\begin{adjustbox}{width=\columnwidth}
			\begin{small}
				\begin{sc}
					\begin{tabular}{l|c c c c} 
						\toprule
						Pearson & Recon. & Denois. & Inpaint. & Classif. \\ 
						\hline
						Recon. &  &  &  &  \\
						Denois. & $.39 \pm .01 $ &  & &  \\
						Inpaint. & $.31 \pm .02 $ & $.32 \pm .01$ &  &  \\
						Classif. & $.44 \pm .02$ & $.40 \pm .00$ & $.32 \pm .02$ &  \\
						\bottomrule
					\end{tabular}
				\end{sc}
			\end{small}	
		\end{adjustbox}
	\end{center}
	\vskip -0.1in
\end{table}

\begin{table}[h]
	\vskip -0.1in
	\caption{Pearson correlation for example importance (avg +/- std).}
	\vspace{-0.1in}
	\label{tab:pretext_examples_pearson}
	\vskip 0.05in
	\begin{center}
		\begin{adjustbox}{width=\columnwidth}
			\begin{small}
				\begin{sc}
					\begin{tabular}{l|c c c c} 
						\toprule
						Pearson & Recon. & Denois. & Inpaint. & Classif. \\ 
						\hline
						Recon. &  &  &  &  \\
						Denois.     & $.10 \pm .04$   &  &  &   \\
						Inpaint.     & $.11 \pm .03$  & $.12 \pm .03$ &  &  \\
						Classif. & $.07 \pm .03$  & $.06 \pm .02$ & $.07 \pm .01$ &   \\
						\bottomrule
					\end{tabular}
				\end{sc}
			\end{small}	
		\end{adjustbox}
	\end{center}
\vspace{0in}
\end{table}

\textbf{Qualitative Analysis.} Beyond quantitative analysis, label-free explainability allows us to appreciate qualitative differences between different encoders. To illustrate, we plot the most important DKNN and the saliency maps for the various encoders in Figure~\ref{fig:examples_pretext}.  \parapoint{Feature Importance:} In accordance with our quantitative analysis, the saliency maps between different tasks look different. For instance, the denoising encoder seems to focus on small contiguous parts of the images. In contrast, the classifier seems to focus on a few isolated pixels. \parapoint{Example Importance:} The top examples are rarely similar across various pretext tasks, as suggested by the quantitative analysis. The classifier is the only one that associates an example of the same class given an ambiguous image like Image 3. \parapoint{Synergies:} Sometimes, saliency maps permit to better understand example importance. For instance, let us consider Image 3. In comparison to the other encoders, the reconstruction encoder pays less attention to the crucial loop at the image bottom. Hence, it is not surprising that the corresponding top example is less relevant than those selected by the other encoders.

\subsection{Challenging our Assumptions with Disentangled VAEs} \label{subsec:vaes}

\begin{figure*}[h]
	\vspace{-0.1in}
	\centering
	\subfigure[MNIST]{
		\begin{minipage}[t]{.26\linewidth}
			\centering
			\includegraphics[width=1\linewidth]{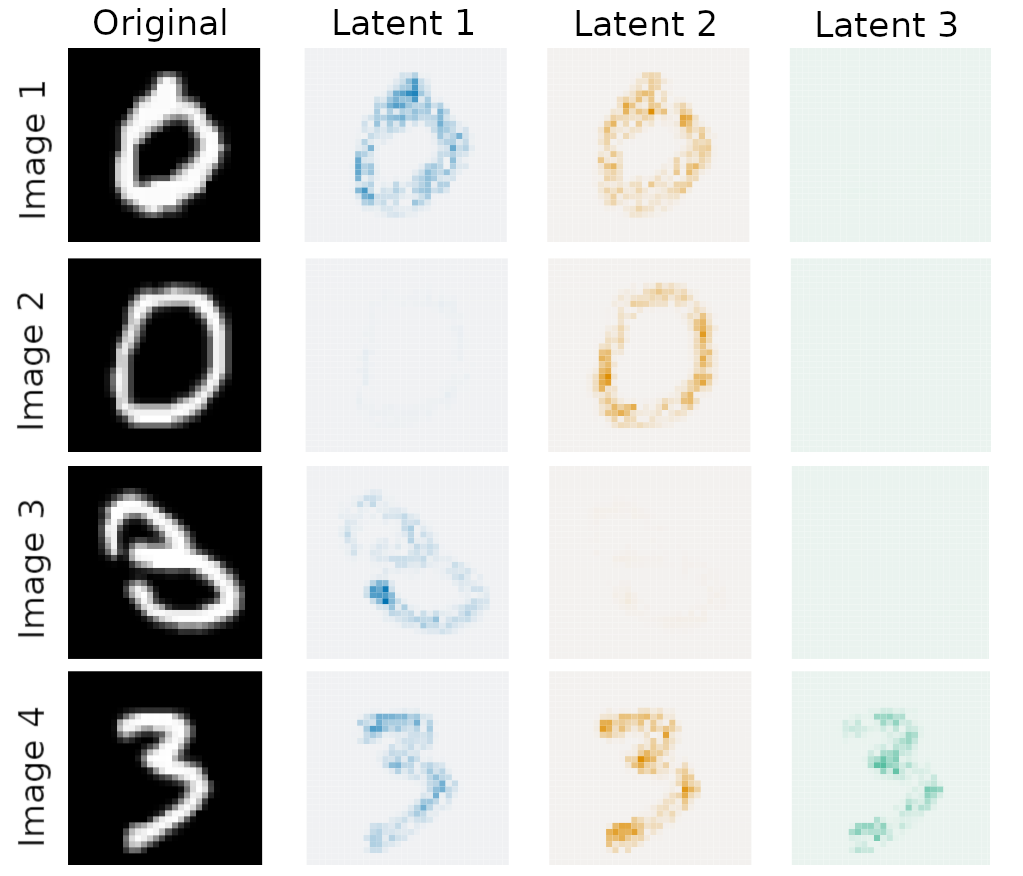}
		\end{minipage}%
	}%
	\hspace{2px}
	\rule{1px}{110px}
	\subfigure[dSprites]{ \label{subfig:examples_vae_dsprites}
		\begin{minipage}[t]{.44\linewidth} 
			\centering
			\includegraphics[width=1\linewidth]{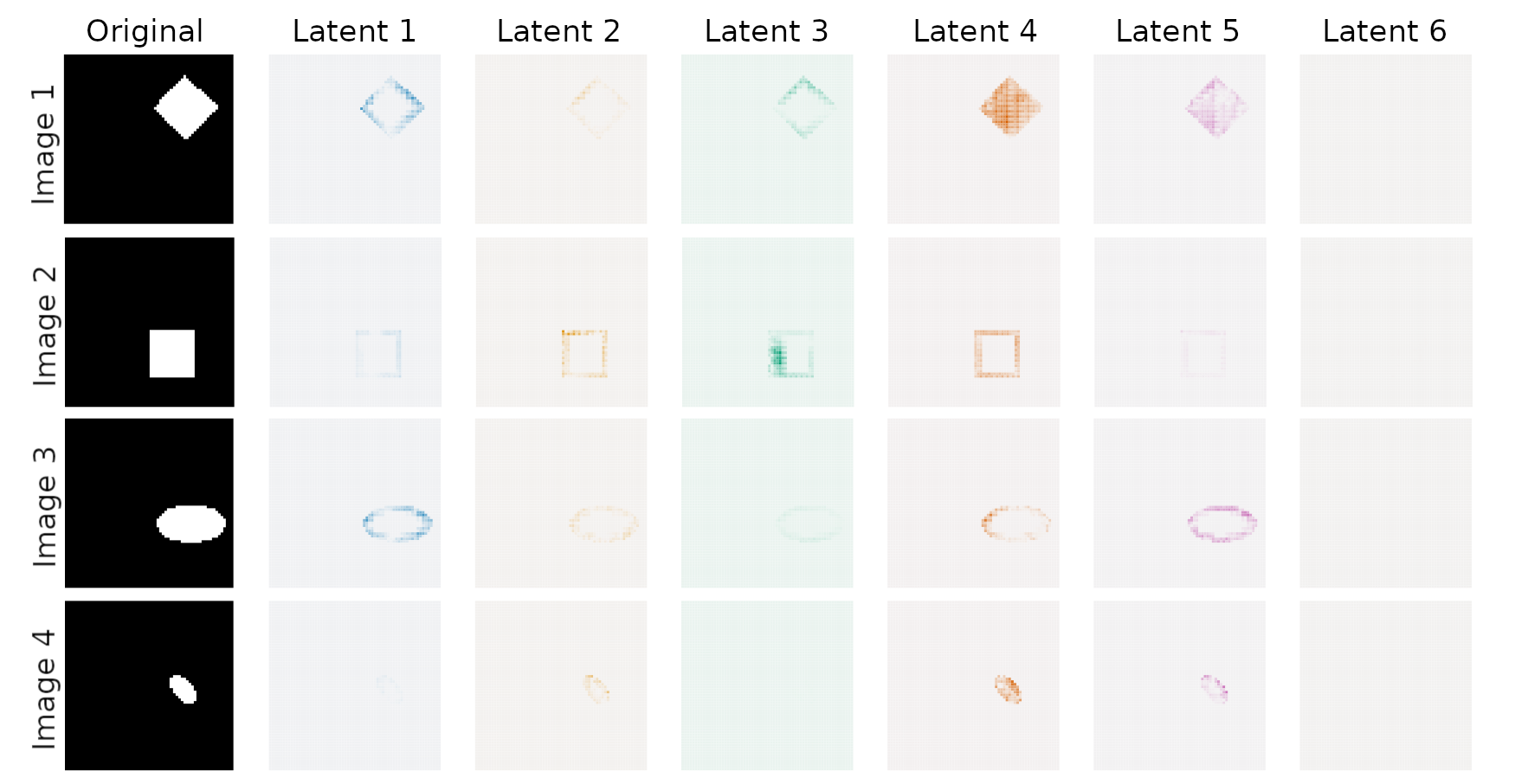}
		\end{minipage}%
	}%
	\vspace{-0.1in}
	\caption{Saliency maps for each unit of the disentangled VAEs. The scale is constant for each image.}
	\label{fig:examples_vae}
\end{figure*}

\begin{figure*}[h] 
	\vspace{-0.1in}
	\centering
	\subfigure[MNIST]{
		\begin{minipage}[t]{0.35\linewidth}
			\centering
			\includegraphics[width=\linewidth]{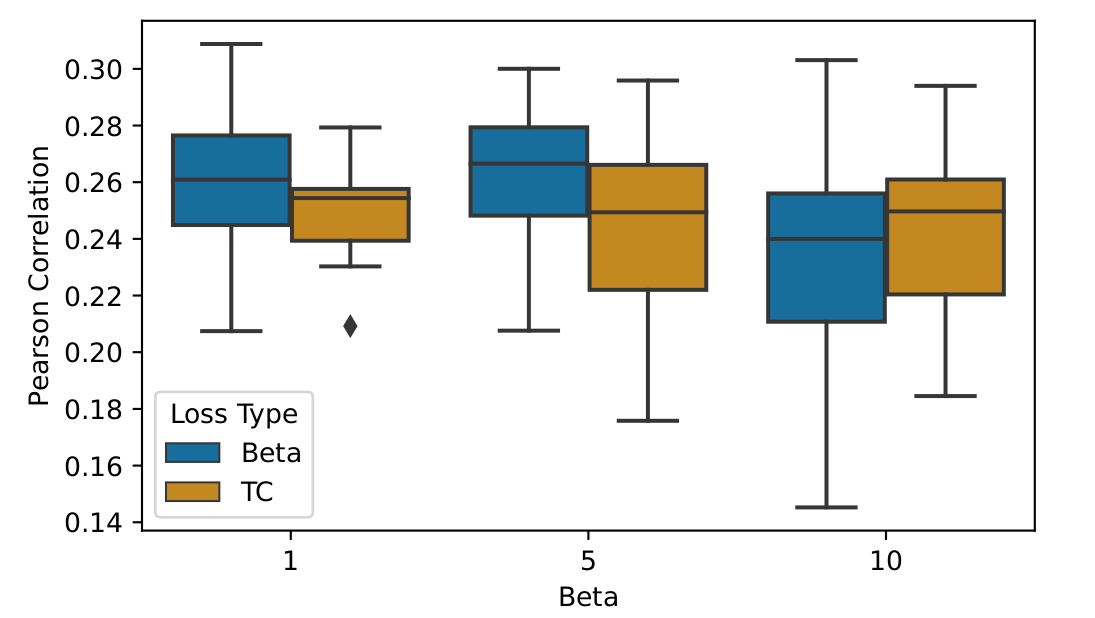}
		\end{minipage}
	}%
	\subfigure[dSprites]{
		\begin{minipage}[t]{0.35\linewidth}
			\centering
			\includegraphics[width=\linewidth]{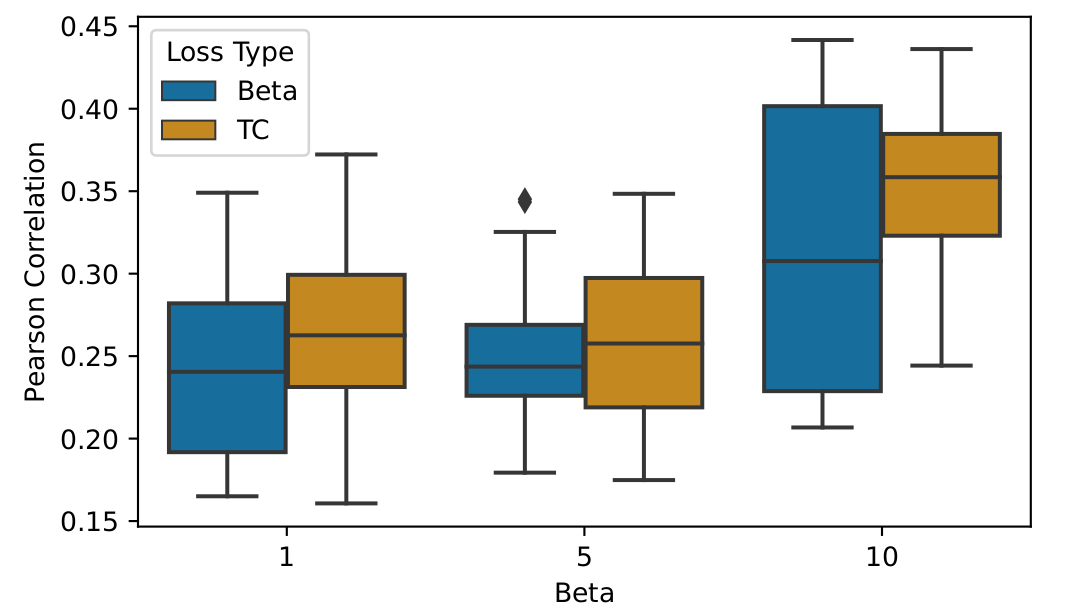}
		\end{minipage}
	}
	\vspace{-0.1in}
	\caption{Pearson correlation between saliency maps for different values of $\beta$.}
	\label{fig:vae_pearson}
	\vspace{-0.1in}	
\end{figure*}

In Definition~\ref{def:lf_feature_importance}, the inner product appearing in the label-free importance expression corresponds to a sum over the latent space dimensions. In practice, this has the effect of mixing the feature importance for each latent unit (neuron) to compute an overall feature importance. While this is reasonable when no particular meaning is attached to each latent unit, it might be inappropriate when the units are designed to be interpretable. Disantangled VAEs, for instance, involve latent units that are sensitive to change in single data generative factors, while invariants to other factors. This selective sensitivity permits to assign a meaning to each unit. An important question ensues: can we identify the generative factor associated to each latent unit by using their saliency maps? To answer, we propose a qualitative and quantitative analysis of the saliency maps from disentangled latent units. We show that, even for disentangled $\beta$-VAEs, the saliency maps of individual latent units are hard to interpret on their own.

\textbf{Setup.} We study two popular disentangled VAEs : the $\beta$-VAE~\cite{Higgins2017} and the TC-VAE~\cite{Chen2018}. Those two VAEs involve a variational encoder computing the expected representation $\vaemu: \X \rightarrow \H $ as well as its standard deviation $\vaesigma: \X \rightarrow \H$ and a decoder $\f_d : \H \rightarrow \X$. Latent samples are obtained with the reparametrization trick~\cite{Kingma2013}: $\h = \vaemu(\x) + \vaesigma(\x) \odot \noise$, $\noise \sim \mathcal{N}(\boldsymbol{0}, \boldsymbol{I})$. These VAEs are trained on the MNIST and dSprites datasets~\cite{dsprites17} ($90\% - 10 \%$ train-test split) to minimize their objective. We use $d_H = 3$ latent units for MNIST and $d_H = 6$ for dSprites. We train 20 disentangled VAEs of each type for $\beta \in \{1,5,10\}$.

\textbf{Qualitative Analysis.} We use Gradient Shap to evaluate the importance\footnote{In this case, we \emph{don't} use label-free feature importance $b_i$.} $a_i(\mu_j , \x)$ of each pixel $x_i$ from the image $\x$ to predict each latent unit $j \in [d_H]$. Again, we use the Pearson correlation to compare the saliency maps for each pair of latent unit~\footnote{We average the correlation over pairs of latent units.}. In this case, a low Pearson correlation corresponds to latent units paying attention to distinct parts of the images. Clearly, this is desirable if we want to easily identify the specialization of each latent unit. Therefore, use this criterion to select a VAE to analyse among the 120 VAEs we trained on each dataset. This corresponds to a $\beta$-VAE with $\beta = 10$ for MNIST and a TC-VAE with $\beta = 1$ for dSprites. We show the various saliency maps for 4 test images on Figure~\ref{fig:examples_vae}. The saliency maps appear to follow patterns that make the interpretation difficult. Here are a couple of examples that we can observe: \textbf{(1)} A latent unit is sensitive to a given image while insensitive to a similar image (e.g. Latent Unit 1 of the MNIST VAE is sensitive to Image 1 but not to Image 2). \textbf{(2)} The focus of a latent unit changes completely between two similar images (e.g. Latent Unit 4 of the dSprites VAE focuses on the interior of the square from Image 1 but only on the edges of the square from Image 2). \textbf{(3)} Several latent units focus on the same part of the image (e.g. Image 4 of MNIST and Image 3 of dSprites).  Additional examples can be found in Appendix~\ref{appendix:experiments_details}.   

\textbf{Quantitative Analysis.} If we increase the disentanglement of a VAE, does it imply that distinct latent units are going to focus on more distinct features? For the disantangled VAEs we consider, the strength of disentanglement increases with $\beta$. Hence, the above question can be formulated more precisely : does the correlation between the latent units saliency map decrease with $\beta$? To answer this question, we make box-plots of the Pearson correlation coefficients for both VAE types (Beta and TC) and various values of $\beta$. Results can be observed in Figure~\ref{fig:vae_pearson}. For MNIST, the Pearson correlation slightly decreases with $\beta$ (Spearman $\rho = -.15$). For dSprites, the Pearson correlation moderately increases with $\beta$ (Spearman $\rho =.43$). This analysis shows that increasing $\beta$ \emph{does not} imply that latent units are going to pay attention to distinct part of the images. In fact, the opposite is true for dSprites. If this results is surprising at first, it can be understood by thinking about disentanglement. As aforementioned, increasing disentanglement encourages distinct latent units to pay attention to distinct generative factors. There is no guarantee that distinct generative factors are associated to distinct features. To illustrate, let us consider two generative factors of the dSprites dataset: the position of a shape and its scale. Clearly, these two generative factors can be identified by paying attention to the edges of the shape appearing on the image, as various latent units do in Figure~\ref{subfig:examples_vae_dsprites}. Whenever generative factors are not unambiguously associated to a subset of features, latent units can identify distinct generative factors by looking at similar features. In this case, increasing disentanglement of latent units does not necessarily make their saliency maps more decorrelated. We conclude two things: \textbf{(1)} If we want to identify the role of each latent unit with their saliency maps, disentanglement might not be the right approach. Perhaps it is possible to introduce priors on the saliency map to control the features the model pays attention to, like it was done by \citet{Erion2021} in a supervised setting. We leave this for future works. \textbf{(2)} Taking a weighted sum of these saliency maps (as done by our label-free wrappers) does not sacrifice any interpretable information specific to each unit.

\section{Discussion} \label{sec:discussion}
We introduced label-free explainability, a new framework to extend linear feature importance and  example importance methods to the unsupervised setting. We showed that our framework guarantees crucial properties, such as completeness and invariance with respect to latent symmetries. We validated the framework on several datasets and verified that it permits to compare different representation spaces, both qualitatively and quantitatively. Finally, we challenged some common beliefs about the interpretability of $\beta$-VAEs. 

Label-free explainability opens up many interesting avenues for future work. A first one is the extension of loss-based example importance methods to contrastive losses, hence completing Section~\ref{sec:example}. Another one is to compare the representation learned by different state of the art encoders with the approach from Section~\ref{subsec:pretext}. A third one, as suggested in Section~\ref{subsec:vaes}, is to regularize latent units to make their individual saliency maps more interpretable. Finally, a more radical way to interpret representation spaces is to use symbolic regression~\cite{Crabbe2020Learning} to express latent units as closed form expressions of the input features.

\section*{Acknowledgements}
The authors are grateful to Zhaozhi Qian, Alicia Curth and the 3 anonymous ICML reviewers for their useful comments on an earlier version of the manuscript. Jonathan Crabbé is funded by Aviva and Mihaela van der Schaar by the Office of Naval Research (ONR), NSF 172251.

\nocite{*}
\clearpage
\bibliographystyle{icml2022}
\bibliography{main}

\begin{thebibliography}{64}
\providecommand{\natexlab}[1]{#1}
\providecommand{\url}[1]{\texttt{#1}}
\expandafter\ifx\csname urlstyle\endcsname\relax
  \providecommand{\doi}[1]{doi: #1}\else
  \providecommand{\doi}{doi: \begingroup \urlstyle{rm}\Url}\fi

\bibitem[Adadi \& Berrada(2018)Adadi and Berrada]{Adadi2018}
Adadi, A. and Berrada, M.
\newblock {Peeking Inside the Black-Box: A Survey on Explainable Artificial
  Intelligence (XAI)}.
\newblock \emph{IEEE Access}, 6:\penalty0 52138--52160, 2018.
\newblock ISSN 21693536.
\newblock \doi{10.1109/ACCESS.2018.2870052}.

\bibitem[Agarwal et~al.(2016)Agarwal, Bullins, and Hazan]{Agarwal2016}
Agarwal, N., Bullins, B., and Hazan, E.
\newblock {Second-Order Stochastic Optimization for Machine Learning in Linear
  Time}.
\newblock \emph{Journal of Machine Learning Research}, 18:\penalty0 1--40,
  2016.
\newblock ISSN 15337928.

\bibitem[Bach et~al.(2015)Bach, Binder, Montavon, Klauschen, M{\"{u}}ller, and
  Samek]{Bach2015}
Bach, S., Binder, A., Montavon, G., Klauschen, F., M{\"{u}}ller, K.~R., and
  Samek, W.
\newblock {On Pixel-Wise Explanations for Non-Linear Classifier Decisions by
  Layer-Wise Relevance Propagation}.
\newblock \emph{PLOS ONE}, 10\penalty0 (7):\penalty0 e0130140, 2015.
\newblock ISSN 1932-6203.
\newblock \doi{10.1371/JOURNAL.PONE.0130140}.

\bibitem[{Barredo Arrieta} et~al.(2020){Barredo Arrieta},
  D{\'{i}}az-Rodr{\'{i}}guez, {Del Ser}, Bennetot, Tabik, Barbado, Garcia,
  Gil-Lopez, Molina, Benjamins, Chatila, and Herrera]{BarredoArrieta2020}
{Barredo Arrieta}, A., D{\'{i}}az-Rodr{\'{i}}guez, N., {Del Ser}, J., Bennetot,
  A., Tabik, S., Barbado, A., Garcia, S., Gil-Lopez, S., Molina, D., Benjamins,
  R., Chatila, R., and Herrera, F.
\newblock {Explainable Artificial Intelligence (XAI): Concepts, taxonomies,
  opportunities and challenges toward responsible AI}.
\newblock \emph{Information Fusion}, 58:\penalty0 82--115, 2020.
\newblock ISSN 15662535.
\newblock \doi{10.1016/j.inffus.2019.12.012}.

\bibitem[Brocki \& Chung(2019)Brocki and Chung]{Brocki2019}
Brocki, L. and Chung, N.~C.
\newblock {Concept saliency maps to visualize relevant features in deep
  generative models}.
\newblock \emph{Proceedings - 18th IEEE International Conference on Machine
  Learning and Applications, ICMLA 2019}, pp.\  1771--1778, 2019.
\newblock \doi{10.1109/ICMLA.2019.00287}.

\bibitem[Burgess et~al.(2018)Burgess, Higgins, Pal, Matthey, Watters,
  Desjardins, Lerchner, and London]{Burgess2018}
Burgess, C.~P., Higgins, I., Pal, A., Matthey, L., Watters, N., Desjardins, G.,
  Lerchner, A., and London, D.
\newblock {Understanding disentangling in $\beta$-VAE}.
\newblock \emph{arXiv}, 2018.

\bibitem[Chen et~al.(2020)Chen, Kornblith, Norouzi, and Hinton]{Chen2020}
Chen, T., Kornblith, S., Norouzi, M., and Hinton, G.
\newblock {A Simple Framework for Contrastive Learning of Visual
  Representations}.
\newblock \emph{37th International Conference on Machine Learning, ICML 2020},
  pp.\  1575--1585, 2020.
\newblock \doi{10.48550/arxiv.2002.05709}.

\bibitem[Chen et~al.(2018)Chen, Li, Grosse, and Duvenaud]{Chen2018}
Chen, T.~Q., Li, X., Grosse, R., and Duvenaud, D.
\newblock {Isolating Sources of Disentanglement in Variational Autoencoders}.
\newblock \emph{6th International Conference on Learning Representations, ICLR
  2018 - Workshop Track Proceedings}, 2018.

\bibitem[Ching et~al.(2018)Ching, Himmelstein, Beaulieu-Jones, Kalinin, Do,
  Way, Ferrero, Agapow, Zietz, Hoffman, Xie, Rosen, Lengerich, Israeli,
  Lanchantin, Woloszynek, Carpenter, Shrikumar, Xu, Cofer, Lavender, Turaga,
  Alexandari, Lu, Harris, DeCaprio, Qi, Kundaje, Peng, Wiley, Segler, Boca,
  Swamidass, Huang, Gitter, and Greene]{Ching2018}
Ching, T., Himmelstein, D.~S., Beaulieu-Jones, B.~K., Kalinin, A.~A., Do,
  B.~T., Way, G.~P., Ferrero, E., Agapow, P.-M., Zietz, M., Hoffman, M.~M.,
  Xie, W., Rosen, G.~L., Lengerich, B.~J., Israeli, J., Lanchantin, J.,
  Woloszynek, S., Carpenter, A.~E., Shrikumar, A., Xu, J., Cofer, E.~M.,
  Lavender, C.~A., Turaga, S.~C., Alexandari, A.~M., Lu, Z., Harris, D.~J.,
  DeCaprio, D., Qi, Y., Kundaje, A., Peng, Y., Wiley, L.~K., Segler, M. H.~S.,
  Boca, S.~M., Swamidass, S.~J., Huang, A., Gitter, A., and Greene, C.~S.
\newblock {Opportunities and obstacles for deep learning in biology and
  medicine}.
\newblock \emph{Journal of The Royal Society Interface}, 15\penalty0
  (141):\penalty0 20170387, 2018.
\newblock ISSN 1742-5689.
\newblock \doi{10.1098/rsif.2017.0387}.

\bibitem[Cook \& Weisenberg(1982)Cook and Weisenberg]{Cook1982}
Cook, R.~D. and Weisenberg, S.
\newblock \emph{{Residuals and influence in regression}}.
\newblock Chapman and Hall, New York, 1982.

\bibitem[Cover \& Thomas(2005)Cover and Thomas]{Cover2005}
Cover, T.~M. and Thomas, J.~A.
\newblock \emph{{Elements of Information Theory}}.
\newblock Wiley, 2005.
\newblock ISBN 9780471241959.
\newblock \doi{10.1002/047174882X}.

\bibitem[Crabb{\'e} \& Van Der~Schaar(2021)Crabb{\'e} and Van
  Der~Schaar]{Crabbe2021Dynamask}
Crabb{\'e}, J. and Van Der~Schaar, M.
\newblock Explaining time series predictions with dynamic masks.
\newblock In Meila, M. and Zhang, T. (eds.), \emph{Proceedings of the 38th
  International Conference on Machine Learning}, volume 139 of
  \emph{Proceedings of Machine Learning Research}, pp.\  2166--2177. PMLR,
  2021.

\bibitem[Crabbe et~al.(2020)Crabbe, Zhang, Zame, and van~der
  Schaar]{Crabbe2020Learning}
Crabbe, J., Zhang, Y., Zame, W.~R., and van~der Schaar, M.
\newblock Learning outside the black-box: The pursuit of interpretable models.
\newblock In \emph{Proceedings of the 34th International Conference on Neural
  Information Processing Systems}, NIPS'20, Red Hook, NY, USA, 2020. Curran
  Associates Inc.
\newblock ISBN 9781713829546.

\bibitem[Crabb{\'{e}} et~al.(2021)Crabb{\'{e}}, Qian, Imrie, and van~der
  Schaar]{Crabbe2021}
Crabb{\'{e}}, J., Qian, Z., Imrie, F., and van~der Schaar, M.
\newblock {Explaining Latent Representations with a Corpus of Examples}.
\newblock In \emph{Advances in Neural Information Processing Systems}, 2021.

\bibitem[Das \& Rad(2020)Das and Rad]{Das2020}
Das, A. and Rad, P.
\newblock {Opportunities and Challenges in Explainable Artificial Intelligence
  (XAI): A Survey}.
\newblock \emph{arXiv}, 2020.

\bibitem[Davies et~al.(2021)Davies, Veli{\v{c}}kovi{\'{c}}, Buesing, Blackwell,
  Zheng, Toma{\v{s}}ev, Tanburn, Battaglia, Blundell, Juh{\'{a}}sz, Lackenby,
  Williamson, Hassabis, and Kohli]{Davies2021}
Davies, A., Veli{\v{c}}kovi{\'{c}}, P., Buesing, L., Blackwell, S., Zheng, D.,
  Toma{\v{s}}ev, N., Tanburn, R., Battaglia, P., Blundell, C., Juh{\'{a}}sz,
  A., Lackenby, M., Williamson, G., Hassabis, D., and Kohli, P.
\newblock {Advancing mathematics by guiding human intuition with AI}.
\newblock \emph{Nature 2021 600:7887}, 600\penalty0 (7887):\penalty0 70--74,
  2021.
\newblock ISSN 1476-4687.
\newblock \doi{10.1038/s41586-021-04086-x}.

\bibitem[Eberle et~al.(2022)Eberle, Buttner, Krautli, Muller, Valleriani, and
  Montavon]{Eberle2022}
Eberle, O., Buttner, J., Krautli, F., Muller, K.~R., Valleriani, M., and
  Montavon, G.
\newblock {Building and Interpreting Deep Similarity Models}.
\newblock \emph{IEEE Transactions on Pattern Analysis and Machine
  Intelligence}, 44\penalty0 (3):\penalty0 1149--1161, 2022.
\newblock ISSN 19393539.
\newblock \doi{10.1109/TPAMI.2020.3020738}.

\bibitem[Erion et~al.(2021)Erion, Janizek, Sturmfels, Lundberg, and
  Lee]{Erion2021}
Erion, G., Janizek, J.~D., Sturmfels, P., Lundberg, S.~M., and Lee, S.-I.
\newblock {Improving performance of deep learning models with axiomatic
  attribution priors and expected gradients}.
\newblock \emph{Nature Machine Intelligence 2021 3:7}, 3\penalty0 (7):\penalty0
  620--631, 2021.
\newblock ISSN 2522-5839.
\newblock \doi{10.1038/s42256-021-00343-w}.

\bibitem[Fong \& Vedaldi(2017)Fong and Vedaldi]{Fong2017}
Fong, R.~C. and Vedaldi, A.
\newblock {Interpretable Explanations of Black Boxes by Meaningful
  Perturbation}.
\newblock \emph{Proceedings of the IEEE International Conference on Computer
  Vision}, pp.\  3449--3457, 2017.
\newblock ISSN 15505499.
\newblock \doi{10.1109/ICCV.2017.371}.

\bibitem[Ghorbani \& Zou(2019)Ghorbani and Zou]{Ghorbani2019}
Ghorbani, A. and Zou, J.
\newblock {Data Shapley: Equitable Valuation of Data for Machine Learning}.
\newblock \emph{36th International Conference on Machine Learning, ICML 2019},
  2019-June:\penalty0 4053--4065, 2019.

\bibitem[Ghorbani et~al.(2019)Ghorbani, Wexler, Zou, and Kim]{Ghorbani2019b}
Ghorbani, A., Wexler, J., Zou, J., and Kim, B.
\newblock {Towards Automatic Concept-based Explanations}.
\newblock \emph{Advances in Neural Information Processing Systems}, 32, 2019.
\newblock ISSN 10495258.

\bibitem[Ghorbani et~al.(2020)Ghorbani, Kim, and Zou]{Ghorbani2020}
Ghorbani, A., Kim, M.~P., and Zou, J.
\newblock {A Distributional Framework for Data Valuation}.
\newblock \emph{arXiv}, 2020.

\bibitem[Glorot et~al.(2011)Glorot, Bordes, and Bengio]{Glorot2011}
Glorot, X., Bordes, A., and Bengio, Y.
\newblock {Deep Sparse Rectifier Neural Networks}, 2011.
\newblock ISSN 1938-7228.

\bibitem[Goldberger et~al.(2000)Goldberger, Amaral, Glass, Hausdorff, Ivanov,
  Mark, Mietus, Moody, Peng, and Stanley]{Goldberger2000}
Goldberger, A.~L., Amaral, L.~A., Glass, L., Hausdorff, J.~M., Ivanov, P.~C.,
  Mark, R.~G., Mietus, J.~E., Moody, G.~B., Peng, C.~K., and Stanley, H.~E.
\newblock {PhysioBank, PhysioToolkit, and PhysioNet: components of a new
  research resource for complex physiologic signals}.
\newblock \emph{Circulation}, 101\penalty0 (23), 2000.
\newblock \doi{10.1161/01.CIR.101.23.E215}.

\bibitem[He et~al.(2015)He, Zhang, Ren, and Sun]{He2015}
He, K., Zhang, X., Ren, S., and Sun, J.
\newblock {Deep Residual Learning for Image Recognition}.
\newblock \emph{Proceedings of the IEEE Computer Society Conference on Computer
  Vision and Pattern Recognition}, 2016-December:\penalty0 770--778, 2015.
\newblock ISSN 10636919.
\newblock \doi{10.48550/arxiv.1512.03385}.

\bibitem[Higgins et~al.(2017)Higgins, Matthey, Pal, Burgess, Glorot, Botvinick,
  Mohamed, Lerchner, and Deepmind]{Higgins2017}
Higgins, I., Matthey, L., Pal, A., Burgess, C., Glorot, X., Botvinick, M.,
  Mohamed, S., Lerchner, A., and Deepmind, G.
\newblock {beta-VAE: Learning Basic Visual Concepts with a Constrained
  Variational Framework}.
\newblock In \emph{International Conference on Learning Representations}, 2017.

\bibitem[Holzinger et~al.(2022)Holzinger, Goebel, Fong, Moon, M{\"{u}}ller, and
  Samek]{Holzinger2022}
Holzinger, A., Goebel, R., Fong, R., Moon, T., M{\"{u}}ller, K.-R., and Samek,
  W.
\newblock \emph{{xxAI - Beyond Explainable Artificial Intelligence}}, pp.\
  3--10.
\newblock Springer International Publishing, 2022.
\newblock ISBN 978-3-031-04083-2.
\newblock \doi{10.1007/978-3-031-04083-2_1}.

\bibitem[Hooker et~al.(2019)Hooker, Erhan, Kindermans, and Kim]{Hooker2018}
Hooker, S., Erhan, D., Kindermans, P.~J., and Kim, B.
\newblock {A benchmark for interpretability methods in deep neural networks}.
\newblock In \emph{Advances in Neural Information Processing Systems},
  volume~32. Neural information processing systems foundation, 2019.

\bibitem[Jing \& Tian(2021)Jing and Tian]{Jing2021}
Jing, L. and Tian, Y.
\newblock {Self-Supervised Visual Feature Learning with Deep Neural Networks: A
  Survey}.
\newblock \emph{IEEE Transactions on Pattern Analysis and Machine
  Intelligence}, 43\penalty0 (11):\penalty0 4037--4058, 2021.
\newblock \doi{10.1109/TPAMI.2020.2992393}.

\bibitem[Jumper et~al.(2021)Jumper, Evans, Pritzel, Green, Figurnov,
  Ronneberger, Tunyasuvunakool, Bates, {\v{Z}}{\'{i}}dek, Potapenko, Bridgland,
  Meyer, Kohl, Ballard, Cowie, Romera-Paredes, Nikolov, Jain, Adler, Back,
  Petersen, Reiman, Clancy, Zielinski, Steinegger, Pacholska, Berghammer,
  Bodenstein, Silver, Vinyals, Senior, Kavukcuoglu, Kohli, and
  Hassabis]{Jumper2021}
Jumper, J., Evans, R., Pritzel, A., Green, T., Figurnov, M., Ronneberger, O.,
  Tunyasuvunakool, K., Bates, R., {\v{Z}}{\'{i}}dek, A., Potapenko, A.,
  Bridgland, A., Meyer, C., Kohl, S. A.~A., Ballard, A.~J., Cowie, A.,
  Romera-Paredes, B., Nikolov, S., Jain, R., Adler, J., Back, T., Petersen, S.,
  Reiman, D., Clancy, E., Zielinski, M., Steinegger, M., Pacholska, M.,
  Berghammer, T., Bodenstein, S., Silver, D., Vinyals, O., Senior, A.~W.,
  Kavukcuoglu, K., Kohli, P., and Hassabis, D.
\newblock {Highly accurate protein structure prediction with AlphaFold}.
\newblock \emph{Nature 2021 596:7873}, 596\penalty0 (7873):\penalty0 583--589,
  2021.
\newblock ISSN 1476-4687.
\newblock \doi{10.1038/s41586-021-03819-2}.

\bibitem[Khosla et~al.(2020)Khosla, Teterwak, Wang, Sarna, Tian, Isola,
  Maschinot, Liu, and Krishnan]{Khosla2020}
Khosla, P., Teterwak, P., Wang, C., Sarna, A., Tian, Y., Isola, P., Maschinot,
  A., Liu, C., and Krishnan, D.
\newblock {Supervised Contrastive Learning}.
\newblock \emph{Advances in Neural Information Processing Systems},
  2020-December, 2020.
\newblock ISSN 10495258.

\bibitem[Kim et~al.(2017)Kim, Wattenberg, Gilmer, Cai, Wexler, Viegas, and
  Sayres]{Kim2017}
Kim, B., Wattenberg, M., Gilmer, J., Cai, C., Wexler, J., Viegas, F., and
  Sayres, R.
\newblock {Interpretability Beyond Feature Attribution: Quantitative Testing
  with Concept Activation Vectors (TCAV)}.
\newblock \emph{35th International Conference on Machine Learning}, 6:\penalty0
  4186--4195, 2017.

\bibitem[Kingma \& Ba(2014)Kingma and Ba]{Kingma2014}
Kingma, D.~P. and Ba, J.~L.
\newblock {Adam: A Method for Stochastic Optimization}.
\newblock \emph{3rd International Conference on Learning Representations, ICLR
  2015 - Conference Track Proceedings}, 2014.

\bibitem[Kingma \& Welling(2013)Kingma and Welling]{Kingma2013}
Kingma, D.~P. and Welling, M.
\newblock {Auto-Encoding Variational Bayes}.
\newblock \emph{2nd International Conference on Learning Representations, ICLR
  2014 - Conference Track Proceedings}, 2013.

\bibitem[Koh \& Liang(2017)Koh and Liang]{Koh2017}
Koh, P.~W. and Liang, P.
\newblock {Understanding Black-box Predictions via Influence Functions}.
\newblock In \emph{International Conference on Machine Learning}, volume~34,
  pp.\  2976--2987. PMLR, 2017.

\bibitem[Kokhlikyan et~al.(2020)Kokhlikyan, Miglani, Martin, Wang, Alsallakh,
  Reynolds, Melnikov, Kliushkina, Araya, Yan, and
  Reblitz-Richardson]{Kokhlikyan2020}
Kokhlikyan, N., Miglani, V., Martin, M., Wang, E., Alsallakh, B., Reynolds, J.,
  Melnikov, A., Kliushkina, N., Araya, C., Yan, S., and Reblitz-Richardson, O.
\newblock {Captum: A unified and generic model interpretability library for
  PyTorch}.
\newblock \emph{arXiv}, 2020.

\bibitem[Kong \& Chaudhuri(2021)Kong and Chaudhuri]{Kong2021}
Kong, Z. and Chaudhuri, K.
\newblock {Understanding Instance-based Interpretability of Variational
  Auto-Encoders}.
\newblock In \emph{Advances in Neural Information Processing Systems}, 2021.

\bibitem[Krizhevsky(2009)]{Krizhevsky2009}
Krizhevsky, A.
\newblock Learning multiple layers of features from tiny images.
\newblock Technical report, 2009.

\bibitem[{Le Meur} \& Baccino(2013){Le Meur} and Baccino]{LeMeur2013}
{Le Meur}, O. and Baccino, T.
\newblock {Methods for comparing scanpaths and saliency maps: Strengths and
  weaknesses}.
\newblock \emph{Behavior Research Methods}, 45\penalty0 (1):\penalty0 251--266,
  2013.
\newblock ISSN 1554351X.
\newblock \doi{10.3758/S13428-012-0226-9/TABLES/2}.

\bibitem[LeCun et~al.(1998)LeCun, Bottou, Bengio, and Haffner]{LeCun1998}
LeCun, Y., Bottou, L., Bengio, Y., and Haffner, P.
\newblock {Gradient-based learning applied to document recognition}.
\newblock \emph{Proceedings of the IEEE}, 86\penalty0 (11):\penalty0
  2278--2323, 1998.
\newblock ISSN 00189219.
\newblock \doi{10.1109/5.726791}.

\bibitem[Lipton(2016)]{Lipton2016}
Lipton, Z.~C.
\newblock {The Mythos of Model Interpretability}.
\newblock \emph{Communications of the ACM}, 61\penalty0 (10):\penalty0 35--43,
  2016.

\bibitem[Lundberg \& Lee(2017)Lundberg and Lee]{Lundberg2017}
Lundberg, S. and Lee, S.-I.
\newblock {A Unified Approach to Interpreting Model Predictions}.
\newblock \emph{Advances in Neural Information Processing Systems}, pp.\
  4766--4775, 2017.

\bibitem[Matthey et~al.(2017)Matthey, Higgins, Hassabis, and
  Lerchner]{dsprites17}
Matthey, L., Higgins, I., Hassabis, D., and Lerchner, A.
\newblock {dSprites: Disentanglement testing Sprites dataset}.
\newblock https://github.com/deepmind/dsprites-dataset/, 2017.

\bibitem[Montavon et~al.(2018)Montavon, Samek, and M{\"{u}}ller]{Montavon2018}
Montavon, G., Samek, W., and M{\"{u}}ller, K.~R.
\newblock {Methods for interpreting and understanding deep neural networks}.
\newblock \emph{Digital Signal Processing}, 73:\penalty0 1--15, 2018.
\newblock ISSN 1051-2004.
\newblock \doi{10.1016/J.DSP.2017.10.011}.

\bibitem[Ouerhani et~al.(2003)Ouerhani, H{\"{u}}gli, M{\"{u}}ri, and {Von
  Wartburg}]{Ouerhani2003}
Ouerhani, N., H{\"{u}}gli, H., M{\"{u}}ri, R., and {Von Wartburg}, R.
\newblock {Empirical Validation of the Saliency-based Model of Visual
  Attention}.
\newblock \emph{Electronic Letters on Computer Vision and Image Analysis},
  3\penalty0 (1):\penalty0 13--23, 2003.

\bibitem[Pandit et~al.(2019)Pandit, Sahraee-Ardakan, Rangan, Schniter,
  Fletcher, Pandit, Sahraee-Ardakan, and Fletcher]{Pandit2019}
Pandit, P., Sahraee-Ardakan, M., Rangan, S., Schniter, P., Fletcher, A.~K.,
  Pandit, P., Sahraee-Ardakan, M., and Fletcher, A.~K.
\newblock {Inference with Deep Generative Priors in High Dimensions}.
\newblock \emph{IEEE Journal on Selected Areas in Information Theory},
  1\penalty0 (1):\penalty0 336--347, 2019.
\newblock \doi{10.1109/jsait.2020.2986321}.

\bibitem[Papernot \& McDaniel(2018)Papernot and McDaniel]{Papernot2018}
Papernot, N. and McDaniel, P.
\newblock {Deep k-Nearest Neighbors: Towards Confident, Interpretable and
  Robust Deep Learning}.
\newblock \emph{arXiv}, 2018.

\bibitem[Pearlmutter(1994)]{Pearlmutter1994}
Pearlmutter, B.~A.
\newblock {Fast Exact Multiplication by the Hessian}.
\newblock \emph{Neural Computation}, 6:\penalty0 160, 1994.

\bibitem[Pruthi et~al.(2020)Pruthi, Liu, Sundararajan, and Kale]{Pruthi2020}
Pruthi, G., Liu, F., Sundararajan, M., and Kale, S.
\newblock {Estimating Training Data Influence by Tracing Gradient Descent}.
\newblock In \emph{Advances in Neural Information Processing Systems}, pp.\
  19920--19930. Curran Associates, Inc., 2020.

\bibitem[Rai(2019)]{Rai2019}
Rai, A.
\newblock {Explainable AI: from black box to glass box}.
\newblock \emph{Journal of the Academy of Marketing Science 2019 48:1},
  48\penalty0 (1):\penalty0 137--141, 2019.
\newblock ISSN 1552-7824.
\newblock \doi{10.1007/S11747-019-00710-5}.

\bibitem[Rajkomar et~al.(2019)Rajkomar, Dean, and Kohane]{Rajkomar2019}
Rajkomar, A., Dean, J., and Kohane, I.
\newblock {Machine Learning in Medicine}.
\newblock \emph{New England Journal of Medicine}, 380\penalty0 (14):\penalty0
  1347--1358, 2019.
\newblock ISSN 0028-4793.
\newblock \doi{10.1056/NEJMRA1814259/SUPPL_FILE/NEJMRA1814259_DISCLOSURES.PDF}.

\bibitem[Rasmussen(2003)]{Rasmussen2003}
Rasmussen, C.~E.
\newblock {Gaussian Processes in Machine Learning}.
\newblock \emph{Lecture Notes in Computer Science (including subseries Lecture
  Notes in Artificial Intelligence and Lecture Notes in Bioinformatics)},
  3176:\penalty0 63--71, 2003.
\newblock ISSN 16113349.
\newblock \doi{10.1007/978-3-540-28650-9_4}.

\bibitem[Ribeiro et~al.(2016)Ribeiro, Singh, and Guestrin]{Ribeiro2016}
Ribeiro, M.~T., Singh, S., and Guestrin, C.
\newblock {"Why should i trust you?" Explaining the predictions of any
  classifier}.
\newblock In \emph{Proceedings of the ACM SIGKDD International Conference on
  Knowledge Discovery and Data Mining}, volume 13-17, pp.\  1135--1144.
  Association for Computing Machinery, 2016.
\newblock ISBN 9781450342322.
\newblock \doi{10.1145/2939672.2939778}.

\bibitem[Rudin(2019)]{Rudin2019}
Rudin, C.
\newblock {Stop explaining black box machine learning models for high stakes
  decisions and use interpretable models instead}.
\newblock \emph{Nature Machine Intelligence 2019 1:5}, 1\penalty0 (5):\penalty0
  206--215, 2019.
\newblock ISSN 2522-5839.
\newblock \doi{10.1038/s42256-019-0048-x}.

\bibitem[Sarhan et~al.(2019)Sarhan, Eslami, Navab, and Albarqouni]{Sarhan2019}
Sarhan, M.~H., Eslami, A., Navab, N., and Albarqouni, S.
\newblock {Learning Interpretable Disentangled Representations Using
  Adversarial VAEs}.
\newblock \emph{Lecture Notes in Computer Science (including subseries Lecture
  Notes in Artificial Intelligence and Lecture Notes in Bioinformatics)}, 11795
  LNCS:\penalty0 37--44, 2019.
\newblock \doi{10.1007/978-3-030-33391-1_5}.

\bibitem[Shannon(1948)]{Shannon1948}
Shannon, C.~E.
\newblock {A Mathematical Theory of Communication}.
\newblock \emph{Bell System Technical Journal}, 27\penalty0 (4):\penalty0
  623--656, 1948.
\newblock ISSN 00058580.
\newblock \doi{10.1002/j.1538-7305.1948.tb00917.x}.

\bibitem[Shrikumar et~al.(2017)Shrikumar, Greenside, and
  Kundaje]{Shrikumar2017}
Shrikumar, A., Greenside, P., and Kundaje, A.
\newblock {Learning Important Features Through Propagating Activation
  Differences}.
\newblock \emph{34th International Conference on Machine Learning, ICML 2017},
  7:\penalty0 4844--4866, 2017.

\bibitem[Simonyan et~al.(2013)Simonyan, Vedaldi, and Zisserman]{Simonyan2013}
Simonyan, K., Vedaldi, A., and Zisserman, A.
\newblock {Deep Inside Convolutional Networks: Visualising Image Classification
  Models and Saliency Maps}.
\newblock \emph{2nd International Conference on Learning Representations, ICLR
  2014 - Workshop Track Proceedings}, 2013.

\bibitem[Sundararajan et~al.(2017)Sundararajan, Taly, and
  Yan]{Sundararajan2017}
Sundararajan, M., Taly, A., and Yan, Q.
\newblock {Axiomatic Attribution for Deep Networks}.
\newblock \emph{34th International Conference on Machine Learning, ICML 2017},
  7:\penalty0 5109--5118, 2017.

\bibitem[Tjoa \& Guan(2020)Tjoa and Guan]{Tjoa2020}
Tjoa, E. and Guan, C.
\newblock {A Survey on Explainable Artificial Intelligence (XAI): Toward
  Medical XAI}.
\newblock \emph{IEEE Transactions on Neural Networks and Learning Systems},
  pp.\  1--21, 2020.
\newblock ISSN 2162-237X.
\newblock \doi{10.1109/tnnls.2020.3027314}.

\bibitem[Voulodimos et~al.(2018)Voulodimos, Doulamis, Doulamis, and
  Protopapadakis]{Voulodimos2018}
Voulodimos, A., Doulamis, N., Doulamis, A., and Protopapadakis, E.
\newblock {Deep Learning for Computer Vision: A Brief Review}.
\newblock \emph{Computational Intelligence and Neuroscience}, 2018, 2018.
\newblock ISSN 16875273.
\newblock \doi{10.1155/2018/7068349}.

\bibitem[Wachter et~al.(2017)Wachter, Mittelstadt, and Russell]{Wachter2017}
Wachter, S., Mittelstadt, B., and Russell, C.
\newblock {Counterfactual Explanations without Opening the Black Box: Automated
  Decisions and the GDPR}.
\newblock \emph{SSRN Electronic Journal}, 2017.
\newblock ISSN 1556-5068.
\newblock \doi{10.2139/ssrn.3063289}.

\bibitem[Yeh et~al.(2018)Yeh, Kim, Yen, and Ravikumar]{Yeh2018}
Yeh, C.-K., Kim, J.~S., Yen, I. E.~H., and Ravikumar, P.
\newblock {Representer Point Selection for Explaining Deep Neural Networks}.
\newblock \emph{Advances in Neural Information Processing Systems},
  31:\penalty0 9291--9301, 2018.

\bibitem[Young et~al.(2018)Young, Hazarika, Poria, and Cambria]{Young2018}
Young, T., Hazarika, D., Poria, S., and Cambria, E.
\newblock {Recent trends in deep learning based natural language processing}.
\newblock \emph{IEEE Computational Intelligence Magazine}, 13\penalty0
  (3):\penalty0 55--75, 2018.
\newblock ISSN 15566048.
\newblock \doi{10.1109/MCI.2018.2840738}.

\end{thebibliography}

\newpage
\appendix
\onecolumn
\section{Properties of the Label-Free Extensions} \label{appendix:properties}
In this appendix, we prove the completeness property. Next, we motivate and prove the orthogonal invariance of our label-free extensions.

\subsection{Completeness}
Let us prove Proposition~\ref{proposition:completeness}.
\begin{proof}
	The proof is an immediate consequence of Definition~\ref{def:lf_feature_importance} and the completeness property of the feature importance score $a_i$:
	\begin{align*}
		\sum_{i=1}^{d_X} b_i(\f , \x) &\stackrel{\eqref{eq:lf_feature_importance}}{=} \sum_{i=1}^{d_X} a_i(g_{\x}, \x) \\
		&= g_{\x}(\x) - a_0,
	\end{align*}
where we used the completeness property to obtain the second equality and $a_0 \in \R$ is the baseline for the importance score $a_i$. By noting that $g_{\x}(\x) = \iprod{\f(\x) , \f(\x)} = \norm{\f(\x)}^2$, we obtain the desired equality \eqref{eq:lf_completeness} with the identification $b_0 = a_0$. 
\end{proof}

\subsection{Invariance with respect to latent symmetries}
\label{subappendix:invariance}
In Section~\ref{sec:introduction}, we described the ambiguity associated to the axes of the latent space $\H$. This line of reasoning can be made more formal with symmetries. Due to the fact that each axis of the latent space is not associated with a fixed and predetermined label, there exists many latent spaces that are equivalent to each other. For instance, if we swap two axes of the latent space by relabelling $h_1 \mapsto h_2$ and $h_2 \mapsto h_1$, we do not change the latent space structure. More generally, given an inner product $\iprod{\cdot , \cdot}$, the set of transformations that leave the geometry of the latent space $\H$ invariant is the set of orthogonal transformations.

\begin{definition}[Orthogonal Transformations]
	Let $\H$ be a real vector space equipped with an inner product $\iprod{\cdot , \cdot} : \H^2 \rightarrow \R$. An orthogonal transformation  is a linear map $\T : \H \rightarrow \H$ such that for all $\h_1 , \h_2 \in \H$, we have:
	\begin{align*}
		\iprod{\T(\h_1) , \T(\h_2)} = \iprod{\h_1 , \h_2}.
	\end{align*}
\end{definition} 
\begin{remark}
	In the case where $\iprod{\cdot , \cdot}$ is the standard euclidean inner product $\iprod{\h_1 , \h_2} = \h_1^\intercal \h_2$, the orthogonal transformations are represented by orthogonal matrices in $O(d_H) = \{ \boldsymbol{M} \in \R^{d_H \times d_H} \mid \boldsymbol{M}^\intercal \boldsymbol{M}  \}$. These transformations include rotations, axes permutations and mirror symmetries. 
\end{remark}

Of course, since these transformations leave the geometry of the latent space invariant, we would expect the same for the explanations. We verify that this is indeed the case for our label-free extension of feature importance.

\begin{proposition}[Label-Free Feature Importance Invariance]
	The label-free importance scores $b_i(\cdot , \cdot) , i \in [d_X]$ are invariant with respect to orthogonal transformations in the latent space $\H , \iprod{\cdot, \cdot}$. More formally, for all $\f \in \A(\H^{\X})$ , $\x \in \X$ and $i \in [d_X]$:
	\begin{align*}
		b_i(\T \circ \f , \x) = b_i(\f , \x),
	\end{align*}	
	where $\T$ is any orthogonal transformation of the latent space $\H , \iprod{\cdot, \cdot}$.
\end{proposition}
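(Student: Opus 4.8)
The plan is to reduce the invariance of the vector-valued score $b_i$ to the invariance of the scalar auxiliary function $g_{\x}$ that underlies Definition~\ref{def:lf_feature_importance}. The crucial observation is that the label-free importance score depends on the latent map $\f$ \emph{only} through $g_{\x}$, via $b_i(\f , \x) = a_i(g_{\x} , \x)$. Therefore, if I can show that post-composing $\f$ with an orthogonal transformation $\T$ produces exactly the same auxiliary function, the invariance of $b_i$ follows immediately, regardless of the internal workings of the underlying method $a_i$.

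First I would write down the auxiliary function associated to the transformed latent map $\T \circ \f$, which I denote $g_{\x}^{\T \circ \f}$, and expand it using \eqref{eq:auxiliary_function}:
\begin{align*}
	g_{\x}^{\T \circ \f}(\tilde{\x}) = \iprod{(\T \circ \f)(\x), (\T \circ \f)(\tilde{\x})} = \iprod{\T(\f(\x)), \T(\f(\tilde{\x}))}.
\end{align*}
Then I would invoke the defining property of an orthogonal transformation with the particular choice $\h_1 = \f(\x)$ and $\h_2 = \f(\tilde{\x})$, which yields
\begin{align*}
	g_{\x}^{\T \circ \f}(\tilde{\x}) = \iprod{\f(\x), \f(\tilde{\x})} = g_{\x}(\tilde{\x})
\end{align*}
for every $\tilde{\x} \in \X$. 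Since this holds pointwise, the two auxiliary functions coincide as maps $\X \rightarrow \R$, i.e.\ $g_{\x}^{\T \circ \f} = g_{\x}$.

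Finally I would conclude by applying the feature importance score $a_i$ to both sides: $b_i(\T \circ \f , \x) = a_i(g_{\x}^{\T \circ \f}, \x) = a_i(g_{\x}, \x) = b_i(\f , \x)$, which is exactly the claimed identity. There is no real obstacle here: the entire content of the proposition is the inner-product-preserving property of $\T$, which is precisely what neutralizes the effect of the transformation at the level of $g_{\x}$. It is worth emphasizing in the write-up that this argument uses neither linearity nor completeness of $a_i$ — those hypotheses were needed for Proposition~\ref{proposition:completeness}, but invariance is a structurally cleaner statement that holds for \emph{any} feature importance method, because the latent map only ever enters through the orthogonally-invariant scalar $g_{\x}$.
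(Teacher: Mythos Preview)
Your proof is correct and essentially identical to the paper's: both show that the auxiliary function $g_{\x}$ is unchanged under $\f \mapsto \T \circ \f$ by the inner-product-preserving property of $\T$, and then read off the invariance of $b_i$ from $b_i(\f,\x) = a_i(g_{\x},\x)$. Your closing remark that neither linearity nor completeness of $a_i$ is needed is a valid and worthwhile observation that the paper does not make explicit.
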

\begin{remark}
	This property is a further motivation for the usage of an inner product in Definition~\ref{def:lf_feature_importance}.
\end{remark}
\begin{proof}
	This proposition is a trivial consequence of the inner product appearing in Definition~\ref{def:lf_feature_importance}. Let $g^{\T}_{\x}$ be the auxiliary function associated to $\T \circ \f$ for some $\x \in \X$. We note that for all $\tilde{\x} \in \X$:
	\begin{align*}
		g_{\x}^{\T}(\tilde{\x}) &\stackrel{\eqref{eq:auxiliary_function}}{=} \iprod{\T \circ \f(\x), \T \circ \f(\tilde{\x})} \\
		&= \iprod{\f(\x),  \f(\tilde{\x})} \\
		&\stackrel{\eqref{eq:auxiliary_function}}{=} g_{\x}(\tilde{\x}),
	\end{align*}
	where we used the fact that $\T$ is orthogonal in the second equality and $g_{\x}(\tilde{\x})$ is the auxiliary function associated to $\f$. Since this holds for any $\tilde{\x} \in \X$, we have that $g_{\x}^{\T} = g_{\x}$ for all $\x \in \X$. This allows us to write:
	\begin{align*}
		b_i(\T \circ \f, \x) &\stackrel{\eqref{eq:lf_feature_importance}}{=} a_i(g^{\T}_{\x}, \x) \\
		&= a_i(g_{\x}, \x) \\
		&\stackrel{\eqref{eq:lf_feature_importance}}{=} 	b_i(\f, \x)
	\end{align*}
	for all $\x \in \X$ and $i \in [d_X]$. This is the desired identity.
\end{proof}

When it comes to example importance methods, the same guarantee holds for representation-based methods:

\begin{proposition}[Representation-Based Example Importance Invariance]
	Let  $\H , \iprod{\cdot, \cdot}$ be a latent space. 
	The label-free importance scores $c^n(\cdot , \cdot) , n \in \left[N\right]$ outputted by DKNN~\eqref{eq:dknn} are invariant with respect to orthogonal transformations of $\H$ if they are defined with a kernel $\kappa : \H^2 \rightarrow \R^+$ that is invariant with respect to orthogonal transformations:
	\begin{align*}
		\kappa(\T(\h_1) , \T(\h_2)) = \kappa(\h_1 , \h_2)
	\end{align*}
	for all orthogonal transformation $\T$ and $\h_1, \h_2 \in \H$.
	Similarly, the importance scores outputted by SimplEx~\eqref{eq:simplex} are invariant with respect to orthogonal transformations of $\H$. In both cases, the invariance property can be written more formally: for all $\f \in \A(\H^{\X})$ , $\x \in \X$ and $n \in \left[N\right]$:
	\begin{align*}
		c^n(\T \circ \f , \x) = c^n(\f , \x)
	\end{align*}
\end{proposition}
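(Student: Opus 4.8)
The plan is to exploit the single defining property of an orthogonal transformation, namely that $\T$ preserves the inner product, and to observe that both DKNN and SimplEx build their weights out of quantities expressible purely through this inner product. Everything reduces to one preliminary observation, which I would establish first: $\T$ also preserves the induced distance. Indeed, linearity of $\T$ gives $\T(\h_1) - \T(\h_2) = \T(\h_1 - \h_2)$, and orthogonality then yields
\begin{align*}
	\norm{\T(\h_1) - \T(\h_2)}^2 = \iprod{\T(\h_1 - \h_2), \T(\h_1 - \h_2)} = \iprod{\h_1 - \h_2, \h_1 - \h_2} = \norm{\h_1 - \h_2}^2
\end{align*}
for all $\h_1, \h_2 \in \H$. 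Both cases of the proposition follow from applying this to the representations $\f(\x^n)$ and $\f(\x)$.

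For the DKNN scores~\eqref{eq:dknn}, I would argue in two steps. First, the neighbour set $\textrm{KNN}(\x)$ is determined entirely by the ranking of the distances $\norm{\f(\x^n) - \f(\x)}$ over $n \in [N]$; since the displayed identity shows each such distance is unchanged when $\f$ is replaced by $\T \circ \f$, the set $\textrm{KNN}(\x)$ and hence the indicator $\boldsymbol{1}[n \in \textrm{KNN}(\x)]$ coincide for $\f$ and $\T \circ \f$. Second, the kernel factor is invariant by hypothesis, $\kappa[\T \circ \f(\x^n), \T \circ \f(\x)] = \kappa[\f(\x^n), \f(\x)]$. Multiplying the two invariant factors gives $c^n(\T \circ \f, \x) = c^n(\f, \x)$.

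For the SimplEx scores~\eqref{eq:simplex}, I would instead show that the whole optimisation problem is left untouched by $\T$. For any candidate $\boldsymbol{\lambda}$, commuting the linear map $\T$ through the finite sum gives $\T \circ \f(\x) - \sum_{n=1}^N \lambda^n \, \T \circ \f(\x^n) = \T(\f(\x) - \sum_{n=1}^N \lambda^n \, \f(\x^n))$, and the distance-preservation identity then shows the objective $\norm{\cdot}$ takes the same value for $\T \circ \f$ as for $\f$, at every $\boldsymbol{\lambda}$. Since the feasible set ($\boldsymbol{\lambda} \in [0,1]^N$ with $\sum_{n=1}^N \lambda^n = 1$) does not depend on $\f$, the two problems have identical objective and identical constraints, hence the same minimisers $\boldsymbol{w}(\x)$, giving $c^n(\T \circ \f, \x) = c^n(\f, \x)$.

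The argument is essentially routine once the distance-preservation identity is in hand; the only point I would handle carefully is the SimplEx case. Rather than comparing two minimisers obtained from separate optimisations, I would phrase the claim at the level of the optimisation problem itself, so that any non-uniqueness of the $\argmin$ is harmless: the two problems coincide pointwise in $\boldsymbol{\lambda}$, so their solution sets coincide regardless of whether the minimiser is unique. The linearity of $\T$, needed to pull it through the summation, is the only structural input beyond orthogonality, and it is already built into the definition of an orthogonal transformation.
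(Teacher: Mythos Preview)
Your proposal is correct and follows essentially the same route as the paper: first establish that orthogonal $\T$ preserves the induced distance, then deduce invariance of the $\textrm{KNN}$ set and combine with the kernel hypothesis for DKNN, and pull $\T$ through the convex combination by linearity for SimplEx. Your explicit remark that the SimplEx objectives agree pointwise in $\boldsymbol{\lambda}$, so that the argument is robust to non-uniqueness of the $\argmin$, is a small but nice sharpening relative to the paper's presentation.
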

\begin{remark}
	The invariance property for the kernel function is verified for kernels that involve the inner-product $\iprod{\cdot , \cdot}$ in their definition. This includes RBF, Matern and Polynomial Kernels~\cite{Rasmussen2003}. Note that constant kernels trivially verify this property. Finally, replacing the kernel function by the inverse-distance in latent space (as it is done in our implementation) $\kappa(\h_1 , \h_2) = \norm{\h_1 - \h_2}^{-1}$ also preserves the invariance property.  
\end{remark}
\begin{proof}
	We start by noting that the latent space distance is invariant under orthogonal transformations. for all $\h_1 , \h_2 \in \H$:
	\begin{align*}
		\norm{\T(\h_1) - \T(\h_2)}^2 &= \norm{\T(\h_1 - \h_2)}^2\\
		&= \iprod{\T(\h_1 - \h_2), \T(\h_1 - \h_2)}\\
		&= \iprod{\h_1 - \h_2, \h_1 - \h_2} \\
		&= \norm{\h_1 - \h_2}^2,
	\end{align*}
	where we successively used the linearity and orthogonality of $\T$. Note that this equation is equivalent to $	\norm{\T(\h_1) - \T(\h_2)} = \norm{\h_1 - \h_2}$ as both norms are positive. Since the latent KNNs in \eqref{eq:dknn} are computed with this latent space distance, we deduce their invariance under orthogonal transformations. By combining this to the invariance of the kernel, we obtain the desired invariance for the DKNN importance scores~\eqref{eq:dknn}. For all $\x \in \X$, $\f \in \A(\H^{\X})$ and orthogonal transformation $\T$:
	\begin{align*}
		c_{\textrm{DKNN}}^n(\T \circ \f , \x) &\stackrel{\eqref{eq:dknn}}{=} \boldsymbol{1}\left[n \in \textrm{KNN}(\x)\right] \cdot \kappa \left[ \T \circ \f(\x^n) , \T \circ \f(\x) \right] \\
		&= \boldsymbol{1}\left[n \in \textrm{KNN}(\x)\right] \cdot \kappa \left[ \f(\x^n) , \f(\x) \right] \\
		&\stackrel{\eqref{eq:dknn}}{=} c_{\textrm{DKNN}}^n(\f , \x),
	\end{align*}
	where we have used the invariance property to obtain the second equality. We can proceed similarly for SimplEx~\eqref{eq:simplex}:
	\begin{align*} 
		c_{\textrm{SimplEx}}^n(\T \circ \f , \x) &\stackrel{\eqref{eq:simplex}}{=}\argmin_{\boldsymbol{\lambda} \in [0,1]^N} \norm{\T \circ \f(\x) - \sum_{n=1}^N \lambda^n \T \circ \f(\x^n)} \\
		&= \argmin_{\boldsymbol{\lambda} \in [0,1]^N} \norm{\T\left[ \f(\x) - \sum_{n=1}^N \lambda^n  \f(\x^n)\right]} \\
		&= \argmin_{\boldsymbol{\lambda} \in [0,1]^N} \norm{\f(\x) - \sum_{n=1}^N \lambda^n  \f(\x^n)} \\
		&\stackrel{\eqref{eq:simplex}}{=} c_{\textrm{SimplEx}}^n(\f , \x),
	\end{align*}
	where we have successively used the linearity and orthogonality of $\T$. Those are the desired identities.
\end{proof}

The only label-free extension that we have not yet discussed are the loss-based example importance methods from Section~\ref{subsec:loss_based}. Unfortunately, due to the fact that the black-box $\f$ is only a component required in the evaluation of the loss $L$, it is not possible to provide a general guarantee like in the previous examples. If we take the example of the autoencoder $\f_d \circ \f_e$, we note that applying an orthogonal transformation $\f_e \mapsto \T \circ \f_e$ to the encoder leaves the autoencoder invariant only if this transformation is undone by the decoder $\f_d \mapsto \f_d \circ \T^{-1}$. Unlike the other methods, the invariance of loss-based example importance scores therefore requires restrictive assumptions. If invariance of the explanations under orthogonal transformations is required, this might be an argument in favour of representation-based methods.
\section{Implementation Details} \label{appendix:implementation}
In this appendix, we detail the implementation of our label-free extensions. 

\subsection{Label-Free Feature Importance}
The label-free feature importance methods used in our experiments are described in Table~\ref{tab:feature_importance_methods}:

\begin{table}[h]
	\setstretch{1.5}
	\caption{Feature Importance Methods.}
	\label{tab:feature_importance_methods}
	\begin{center}
		\begin{adjustbox}{width=\columnwidth}
			\begin{tabular}{c c c c l} 
				\toprule
				Method & Ref. & Linearity & Completeness &  Label-Free Expression\\
				\hline
				Saliency & \cite{Simonyan2013} & \cmark & \xmark & $b_i(\f , \x) = \partder{g_{\x}}{x_i}(\x)$\\
				Integrated Gradients & \cite{Sundararajan2017} & \cmark & \cmark & $b_i(\f , \x) = (x_i - \bar{x}_i) \int_0^1 \partder{g_{\x}}{x_i}(\x + \tau(\x - \bar{\x})) d\tau$ \\
				Gradient Shap & \cite{Lundberg2017} & \cmark & \cmark & $b_i(\f , \x) = (x_i - \bar{x}_i) \E_{\noise, U} \left[ \partder{g_{\x}}{x_i}(\x + \noise + U (\x - \bar{\x})) \right]$ \\
				DeepLift & \cite{Shrikumar2017} & \cmark & \cmark & $b_i(\f , \x) = C_{\Delta x_i \Delta g_{\x} }$ \\
				
				\bottomrule
			\end{tabular}
		\end{adjustbox}
	\end{center}
\end{table}

where $\bar{\x} \in \X$ is a baseline input, $\noise \sim \mathcal{N}(\boldsymbol{0}, \boldsymbol{I})$, $U \sim \textrm{Uniform}(0,1)$ and $C_{\Delta x_i \Delta g_{\x} }$ is used by propagating the Deeplift rules along the computational graph of $g_{\x}$. Note that, in each case, partial derivatives are computed with respect to the argument of $g_{\x}$ only (hence we do not consider derivatives of the form $g_{\partder{\x}{x_i}}$). We use the Captum~\cite{Kokhlikyan2020} implementation of each method.

To extend this implementation to the label-free setting, it is necessary to define an auxiliary function $g_{\x}$ associated to the vectorial black-box function $\f$ for each testing example $\x \in \Dtest$. With libraries such as Pytorch, it is possible to define an auxiliary function as a wrapper around the module that represents $\f$. This allows us to compute the importance scores with a single batch call of the original feature importance method, as described in Algorithm~\ref{alg:lf_feature_importance}.

\begin{algorithm}[h]
	\setstretch{1.5}
	\caption{Label-Free Feature Importance}
	\label{alg:lf_feature_importance}
	\begin{algorithmic}
		\STATE {\bfseries Input:} Batch $\rx \in \X^B$ of size $B$, Black-box $\f: \X \rightarrow \H$, Feature importance method $a_i(\cdot, \cdot): \A(\H^{\X}) \times \X^B \rightarrow \R^B$
		\STATE {\bfseries Output:} Batch label-free feature importance $b_i(\f , \rx)$.
		\STATE Define batch auxiliary function $g_{\rx}$ as a wrapper around $\f$ according to \eqref{eq:auxiliary_function}.
		\STATE Compute the label-free importance scores for the batch $b_i(\f , \rX) \leftarrow a_i(g_{\rx}, \rx)$.
	\end{algorithmic}
\end{algorithm}

\subsection{Label-Free Example Importance}

We detail the label-free example importance methods used in our experiments in Table~\ref{tab:example_importance_methods}:

\begin{table}[H]
\setstretch{2}	
\caption{Example Importance Methods.}
\label{tab:example_importance_methods}
\begin{center}
	\begin{adjustbox}{width=\columnwidth}
		\begin{tabular}{c | c c l} 
			\toprule
			Method Type & Method & Ref. & Label-Free Expression\\
			\hline
			\multirowcell{2}{Loss-Based} & Influence Functions & \cite{Koh2017} & $c^n(\f, \x) = \frac{1}{N} \iprod[\relevant{\Theta}]{\grad[\relevant{\param}] L(\x, \params) ,  \hessian[\relevant{\param}]^{-1} \grad[\relevant{\param}] L(\x^n, \params) }$ \\
			& TracIn & \cite{Pruthi2020} & $c^n(\f, \x) = \sum_{t=1}^T \eta_t \iprod[\relevant{\Theta}]{\grad[\relevant{\param}] L(\x, \param_t) ,  \grad[\relevant{\param}] L(\x^n, \param_t) }$ \\
			\hline
			\multirow{3}*{Representation-Based} & Deep K-Nearest Neighbours & \cite{Papernot2018} & $c^n(\f, \x)= \boldsymbol{1}\left[n \in \textrm{KNN}(\x)\right] \cdot \kappa \left[ \f(\x^n) , \f(\x) \right]$ \\
			& SimplEx & \cite{Crabbe2021} & \makecell[l]{$c^n(\f, \x) = \argmin_{\boldsymbol{\lambda} \in [0,1]^N} \norm{\f(\x) - \sum_{n=1}^N \lambda^n  \f(\x^n)}$ \\ $\textrm{s.t. } \sum_{n=1}^N \lambda^n = 1$} \\
			\bottomrule
		\end{tabular}
	\end{adjustbox}
\end{center}
\end{table}

where $\relevant{\param}$ are the parameters of the black-box $\f$. Our implementation closely follows the above references with some subtle differences. For completeness, we detail the algorithm used for each method. We start with Influence Functions in Algorithm~\ref{alg:lf_influence_functions}.

\begin{algorithm}[H]
	\setstretch{1.5}
	\caption{Label-Free Influence Functions}
	\label{alg:lf_influence_functions}
	\begin{algorithmic}
		\STATE {\bfseries Input:} Test input $\x \in \X$, Black-box $\f: \X \rightarrow \H$, Optimized parameters $\params \in \Theta$, Relevant black-box parameters $\relevant{\param} \in \relevant{\Theta}$, Loss function $L: \X \times \Theta \rightarrow \R$ used to train the black-box, Training set $\Dtrain = \{ \x^n \mid n \in [N]\}$, Number of samples $S \in \N^*$, Number of recursions $R \in \N^*$.
		\STATE {\bfseries Output:} Label-free influence functions $c^n(\f , \x)$.
		\STATE Initialize $\boldsymbol{v}_0 \leftarrow \grad[\relevant{\param}] L(\x, \params)$.
		\STATE Initialize $\boldsymbol{v} \leftarrow \boldsymbol{v}_0$.
		\FOR{recursion in $[R]$}
		\STATE Sample $S$ training points $\x^{n_1}, \dots, \x^{n_S}$ from the training set $\Dtrain$.
		\STATE Make a Monte-Carlo estimation of the training loss Hessian $\boldsymbol{H} \leftarrow \frac{1}{S} \grad[\relevant{\param}]^2 \sum_{s=1}^{S} L(\x^{n_s}, \params)$.
		\STATE Update the estimate for the inverse Hessian-vector product $\boldsymbol{v} \leftarrow \boldsymbol{v}_0 + (\boldsymbol{I} - \boldsymbol{H}) \boldsymbol{v}$.
		\ENDFOR
		\STATE Compute the influence function $c^n(\f , \x) \leftarrow \boldsymbol{v}^{\intercal} \grad[\relevant{\param}] L(\x^n, \params)$
	\end{algorithmic}
\end{algorithm}

This implementation follows the original implementation by \citet{Koh2017} that leverages the literature on second-order approximation techniques~\cite{Pearlmutter1994, Agarwal2016}. Note that Monte-Carlo estimations of the Hessian quickly become expensive when the number of model parameters grows. Due to our limited computing infrastructure, we limit the number of recursions to $R=100$. Furthermore, we only compute influence functions for smaller subsets of the training and testing set. The label-free version of TracIn is described in Algorithm~\ref{alg:lf_tracin}. In our implementation, we create a checkpoint after each interval of 10 epochs during training.

\begin{algorithm}[h]
	\setstretch{1.5}
	\caption{Label-Free TracIn}
	\label{alg:lf_tracin}
	\begin{algorithmic}
		\STATE {\bfseries Input:} Test input $\x \in \X$, Black-box $\f: \X \rightarrow \H$, Checkpoint parameters $\{\param_t \in \Theta \mid t \in [T] \}$, Relevant black-box parameters $\relevant{\param} \in \relevant{\Theta}$, Loss function $L: \X \times \Theta \rightarrow \R$ used to train the black-box, Training set $\Dtrain = \{ \x^n \mid n \in [N]\}$, Checkpoint learning rates $\{\eta_t \in \R \mid t \in [T] \}$.
		\STATE {\bfseries Output:} Label-free TraceIn scores $c^n(\f , \x)$.
		\STATE Initialize $c \leftarrow 0$.
		\FOR{$t$ in $[T]$}
		\STATE Update the estimate  $c \leftarrow c + \eta_t \grad[\relevant{\param}]^{\intercal} L(\x^n, \param_t) \grad[\relevant{\param}] L(\x, \param_t)$.
		\ENDFOR
		\STATE Return the score $c^n(\f , \x) \leftarrow c$
	\end{algorithmic}
\end{algorithm}

 When it comes to DKNN, the formula~\eqref{eq:dknn} can be computed explicitly without following a particular procedure. In our implementation, we replaced the kernel function by an inverse distance $\kappa(\h_1 , \h_2) = \norm{\h_1 - \h_2}^{-1}$. Further, to make it more fair with other baselines that assign a score to each examples (and not only to $K \in \N^*$ examples), we removed the indicator in \eqref{eq:dknn}: $\boldsymbol{1}[n \in \textrm{KNN}(x)] \mapsto 1$. In this way, the $K$ most important examples always correspond to the $K$ nearest neighbours. Finally, the label-free version of SimplEx is described in Algorithm~\ref{alg:lf_simplex}.  

\begin{algorithm}[h]
	\setstretch{1.5}
	\caption{SimplEx}
	\label{alg:lf_simplex}
	\begin{algorithmic}
		\STATE {\bfseries Input:} Test input $\x \in \X$, Black-box $\f: \X \rightarrow \H$, Training set $\Dtrain = \{ \x^n \mid n \in [N]\}$, Number of epochs $E \in \N^*$.
		\STATE {\bfseries Output:} SimplEx scores $c^n(\f , \x)$.
		\STATE Initialize weights $(w^n)_{n=1}^N \leftarrow \boldsymbol{0}$.
		\FOR{epoch in $[E]$}
		\STATE Normalize weights $(\lambda^n)_{n=1}^N \leftarrow \textrm{Softmax}\left[ (w^n)_{n=1}^N \right]$
		\STATE Estimate Error $\mathcal{L} = \norm{\f(\x) - \sum_{n=1}^N \lambda^n \f(\x^n)}$.
		\STATE Update weights with Adam  $(w^n)_{n=1}^N \leftarrow \textrm{Adam Step}(\mathcal{L})$.
		\ENDFOR
		\STATE Return the score $c^n(\f , \x) \leftarrow \lambda^n$
	\end{algorithmic}
\end{algorithm}

Note that our implementation of SimplEx is identical to the original one. It relies on an Adam~\cite{Kingma2014} with the default Pytorch parameters ($ \textrm{learning rate} = .001, \beta_1 = .9, \beta_2 = .999, \epsilon = 10^{-8}, \textrm{weight decay} = 0$).

In this section, we have presented many label-free implementations of feature and example importance methods. For some types of explanations, like counterfactual explanations~\cite{Wachter2017}, the label plays an essential role. Hence, it does not always make sense to extend an explanation to the label-free setting. 
\section{Experiments Details} \label{appendix:experiments_details}
In this appendix, we provide further details to support the experiments described in Section~\ref{sec:experiments}. All our experiments have been performed on a machine with Intel(R) Core(TM) i5-8600K CPU @ 3.60GHz [6 cores] and Nvidia GeForce RTX 2080 Ti GPU. Our implementation is done with Python 3.8 and  Pytorch 1.10.0.  

\subsection{Consistency Checks} 
We provide some details for the experiments in Section~\ref{subsec:consistency_checks}. 

\paragraph{ECG5000 dataset.} The dataset $\D$ contains 5000 univariate time series\footnote{Note that $t$ is used to index the time series steps, as opposed to model checkpoints in Section~\ref{subsec:loss_based}.} $(x_t)_{t=1}^T \in \X^T$ describing the heartbeat of a patient. Each time series describes a single heartbeat with a resolution of $T=140$ time steps. For the sake of notation, we will represent univariate time series by vectors: $\x = (x_t)_{t=1}^T$. Each time series comes with a label $y \in \{0,1\}$ indicating if the heartbeat is normal ($y=0$) or abnormal ($y=1$). Since it is laborious to manually annotate 5000 time series, those labels were generated automatically. Of course, those labels are not going to be used in training our model. We only use the labels to perform consistency checks once the model has been trained. 

\paragraph{MNIST autoencoder.} We use a denoising autoencoder $\f_d \circ \f_e : \X \rightarrow \X$ that consists in an encoder $\f_e : \X \rightarrow \H$ and a decoder $\f_d : \H \rightarrow \X$ with $d_X = 28^2$, $d_H = 4$. The architecture for the autoencoder is described in Table~\ref{tab:autoencoder_mnist}. We corrupt each training image $\x \in \Dtrain$ with random noise $\noise \sim \mathcal{N}(\boldsymbol{0}, \nicefrac{\boldsymbol{I}}{3})$ where $\boldsymbol{I}$ is the identity matrix on $\X$. The autoencoder is trained to minimize the denoising loss $L_{\textrm{den}}(\x) = \E_{\noise} [\x - \f_d \circ \f_e (\x + \noise) ]^2$. The autoencoder is trained for 100 epochs with patience 10 by using Pytorch's Adam with hyperparameters: $ \textrm{learning rate} = .001, \beta_1 = .9, \beta_2 = .999, \epsilon = 10^{-8}, \textrm{weight decay} = 10^{-5}$. The testing set is sometimes used for early stopping. This is acceptable because assessing the generalization of the learned model is \emph{not} the focus of our paper. Rather, we only use the test set to study the explanations of the learned model.

\paragraph{ECG5000 autoencoder.} \parapoint{Feature importance:}  We train a reconstruction autoencoder $\f_d \circ \f_e : \X^T \rightarrow \X^T$ that consists in an encoder $\f_e : \X^T \rightarrow \H$ and a decoder $\f_d : \H \rightarrow \X^T$ with $d_H = 64$. This model is trained with a training set $\Dtrain$ of 2919 time series from $\D$ that correspond to \emph{normal} heartbeats: $y = 0$. In this way, the testing set $\Dtest = \D \setminus \Dtrain$ contains only abnormal heartbeats: $y=1$. The model is trained to minimize the reconstruction loss $L_{\textrm{rec}}(\x) = \sum_{t=1}^{T} \left| x_t - \left[\f_{d} \circ \f_e (\x)\right]_t \right|$. The autoencoder is trained for 150 epochs with patience 10 by using Pytorch's Adam with hyperparameters: $ \textrm{learning rate} = .001, \beta_1 = .9, \beta_2 = .999, \epsilon = 10^{-8}, \textrm{weight decay} = 0$. Its detailed architecture is presented in Table~\ref{tab:autoencoder_ecg5000}. \parapoint{Example importance:} We use the autoencoder described in Table~\ref{tab:autoencoder_ecg5000} with $d_H = 32$. The whole training process is identical to the one for feature importance with one difference: $\Dtrain$ and $\Dtest$ are now obtained with a random split of $\D$ ($80\% - 20\%$). This means that both subsets contain normal and abnormal heartbeats.

\paragraph{CIFAR-10 SimCLR.} We use a SimCLR network $\f_p \circ \f_e : \X \rightarrow \mathcal{P}$ that consists in a Resnet-18 encoder $\f_e : \X \rightarrow \H$ and a multilayer perceptron projection head $\f_p : \H \rightarrow \mathcal{P}$ with $d_X = 3 \cdot 32^2$, $d_H = 512$. The architecture for the SimCLR network is described in Table~\ref{tab:simclr_cifar10}. We use SimCLR's contrastive loss to train the model~\cite{Chen2020}. The model is trained for 100 epochs by using Pytorch's SGD with hyperparameters: $ \textrm{learning rate} = .6, \textrm{momentum} = .9, \textrm{weight decay} = 10^{-6}$.

\begin{table}[h]
	\setstretch{1.5}
	\caption{MNIST Autoencoder Architecture.}
	\label{tab:autoencoder_mnist}
	\begin{center}
		\begin{adjustbox}{width=\columnwidth}
			\begin{tabular}{c|clc} 
				\toprule
				Component & Layer Type & Hyperparameters & Activation Function\\
				\hline
				\multirow{7}*{Encoder} & Conv2d & \makecell[l]{Input Channels:1 ; Output Channels:8 ; Kernel Size:3 ; Stride:2 ; Padding:1 } & ReLU  \\ 
				& Conv2d & \makecell[l]{Input Channels:8 ; Output Channels:16 ; Kernel Size:3 ; Stride:2 ; Padding:1 } & ReLU \\ 
				& BatchNorm & Input Channels:16 & ReLU  \\
				& Conv2d & \makecell[l]{Input Channels:16 ; Output Channels:32 ; Kernel Size:3 ; Stride:2 ; Padding:0} & ReLU  \\  
				& Flatten & Start Dimension:1 &   \\ 
				& Linear & Input Dimension: 288 ; Output Dimension: 128 & ReLU  \\
				& Linear & Input Dimension: 128 ; Output Dimension: 4 &  \\ 
				\hline
				\multirow{8}*{Decoder} & Linear & Input Dimension: 4 ; Output Dimension: 128 & ReLU  \\
				& Linear & Input Dimension: 128 ; Output Dimension: 288 & ReLU  \\
				& Unflatten & Dimension:1 ; Unflatten Size:(32, 3, 3) &   \\
				& ConvTranspose2d & \makecell[l]{Input Channels:32 ; Output Channels:16 ; Kernel Size:3 ; Stride:2 ; Output Padding:0} &   \\  
				& BatchNorm & Input Channels:16 & ReLU  \\
				& ConvTranspose2d & \makecell[l]{Input Channels:16 ; Output Channels:8 ; Kernel Size:3 ; Stride:2 ; Output Padding:1} &   \\
				& BatchNorm & Input Channels:8 & ReLU  \\
				& ConvTranspose2d & \makecell[l]{Input Channels:8 ; Output Channels:1 ; Kernel Size:3 ; Stride:2 ; Output Padding:1} & Sigmoid  \\
				\bottomrule
			\end{tabular}
		\end{adjustbox}
	\end{center}
\end{table}

\begin{table}[h]
	\setstretch{1.5}
	\caption{ECG5000 Autoencoder Architecture.}
	\label{tab:autoencoder_ecg5000}
	\begin{center}
		\begin{adjustbox}{width=.8\columnwidth}
			\begin{tabular}{c|clc} 
				\toprule
				Component & Layer Type & Hyperparameters & Activation Function\\
				\hline
				\multirow{2}*{Encoder} & LSTM & \makecell[l]{Input Size:1 ; Hidden Size: $2 \cdot d_H$} &   \\ 
				& LSTM & \makecell[l]{Input Size: $2 \cdot d_H$ ; Hidden Size:$d_H$} &   \\ 
				\hline
				Representation & \multicolumn{3}{l}{\makecell[l]{The latent representation $\h$ is given by the output of the second LSTM at the last time step. \\ This representation is copied at each time step to be a valid input for the first decoder LSTM.}} \\
				\hline
				\multirow{3}*{Decoder} & LSTM & \makecell[l]{Input Size: $d_H$ ; Hidden Size: $d_H$} &   \\ 
				& LSTM & \makecell[l]{Input Size: $d_H$ ; Hidden Size: $2 \cdot d_H$} &   \\ 
				& Linear & Input Dimension: $2 \cdot d_H$ ; Output Dimension: 1 & \\
				\bottomrule
			\end{tabular}
		\end{adjustbox}
	\end{center}
\end{table}

\begin{table}[h]
	\setstretch{1.5}
	\caption{CIFAR-10 ResNet-18  Architecture.}
	\label{tab:simclr_cifar10}
	\begin{center}
		\begin{adjustbox}{width=.8\columnwidth}
			\begin{tabular}{c|clc} 
				\toprule
				Component & Layer Type & Hyperparameters & Activation Function\\
				\hline
				Encoder & ResNet-18 & \makecell[l]{Similar to Appendix B.9 in \cite{Chen2020}.} &   \\ 
				\hline
				\multirow{2}*{Projection Head} & Linear & \makecell[l]{Input Dimension: 512 ; Output Dimension: 2048} &  ReLU \\ 
				& Linear & Input Dimension: 2048 ; Output Dimension: 128 & \\
				\bottomrule
			\end{tabular}
		\end{adjustbox}
	\end{center}
\end{table}

\paragraph{Feature Importance.} As a baseline for MNIST feature importance, we use a black image $\bar{\x} = \boldsymbol{0}$. For ECG5000, we use the average normal heartbeat as a baseline: $\bar{\x} = \nicefrac{\sum_{\x \in \Dtrain} \x}{\left| \Dtrain \right|}$. For CIFAR-10, we use a blurred version of the image $\x$ we wish to explain as a baseline: $\bar{\x} = \boldsymbol{G}_{\sigma} \otimes \x$, where $\boldsymbol{G}_{\sigma}$ is a Gaussian blur with kernel of size 21 with width $\sigma = 5$ and $\otimes$ denotes the convolution operator. 

\paragraph{ROAR Test.} We perform the ROAR test~\cite{Hooker2018} for
our label-free feature importance methods. The setup is similar to Section~\ref{subsec:consistency_checks} except that the most important pixels are removed and a new autoencoder is fitted on the
ablated dataset. We report the results in Figure~\ref{fig:roar}. Again, the label-free feature importance methods discover pixels that increase the test loss more substantially than random pixels when removed, which supports the results from the main paper.

\begin{figure*}[h] 
	\vskip 0in
	\begin{center}
		\centerline{\includegraphics[width=.5\linewidth]{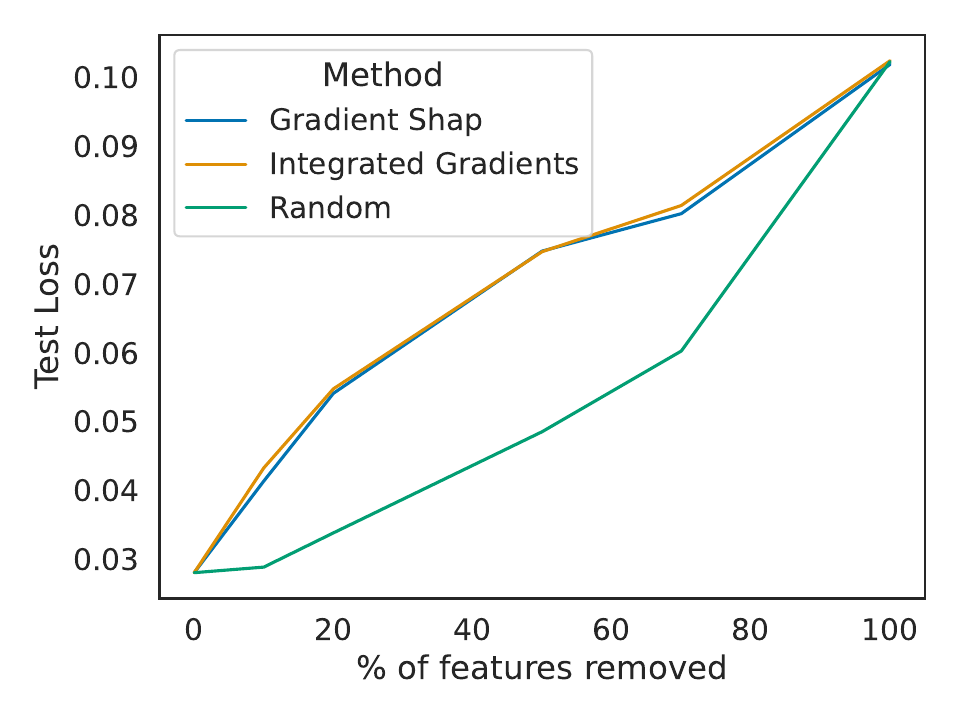}}
		\vspace{-0.1in}
		\caption{ROAR test for MNIST denoising autoencoder.}
		\label{fig:roar}
	\end{center}
	\vskip -0.3in
\end{figure*}

\subsection{Pretext Task Sensitivity} 

We provide some details for the experiments in Section~\ref{subsec:pretext}.

\paragraph{Models.} All the autoencoders have the architecture described in Table~\ref{tab:autoencoder_mnist}. The classifier has all the layers from the encoder in Table~\ref{tab:autoencoder_mnist} with an extra linear layer (Input Dimension:4 ; Output Dimension:10 ; Activation:Softmax) that converts the latent representations to class probabilities. Hence, it can be written as $\f_{l} \circ \f_{e}$, where $\f_{l} : \X \rightarrow \Y$ is an extra linear layer followed by a softmax activation. The reconstruction autoencoder is trained to minimize the reconstruction loss $L_{\textrm{rec}}(\x) = [\x - \f_d \circ \f_e (\x ) ]^2$. The inpainting autoencoder is trained to minimize the inptaiting loss $L_{\textrm{in}}(\x) = \E_{\rm} [\x - \f_d \circ \f_e (\rm \odot \x ) ]^2$, where $\rm$ is a random mask with $M_i \sim \textrm{Bernoulli}(0.8)$ for all $i \in [d_X]$. The classifier is trained to minimize the cross-entropy loss $L_{CE}(\x, \y) = - \y \odot \log \left[\f_l \circ \f_e(\x)\right]$, where $\y$ is the one-hot encoded label associated to the training example $\x$.  All the models are trained to minimize their objective for 100 epochs with patience 10 by using Pytorch's Adam with hyperparameters: $ \textrm{learning rate} = .001, \beta_1 = .9, \beta_2 = .999, \epsilon = 10^{-8}, \textrm{weight decay} = 10^{-5}$.  

\paragraph{Feature Importance.} As a baseline for the feature importance methods, we use a black image $\bar{\x} = \boldsymbol{0}$.

\paragraph{Metrics.} We use the Pearson coefficient to measure the correlation between two importance scores given a random test example and a random feature/training example. In our experiment, we compute the Pearson correlation between the label-free feature importance scores $b_i$ outputted by two different encoder $\f_{e1}, \f_{e2} : \X \rightarrow \H$:
\begin{align*}
	&r_{\textrm{feat.}}(\f_{e1}, \f_{e2}) = \frac{\cov_{\rx, I} \left[ b_I(\f_{e1}, \rx) , b_I(\f_{e2}, \rx) \right]}{\sigma_{\rx, I}\left[ b_I(\f_{e1}, \rx) \right] \sigma_{\rx, I}\left[ b_I(\f_{e2}, \rx) \right]} \\
	&\rx \sim \textrm{Empirical Distribution}(\Dtest) , I \sim \textrm{Uniform}([d_X]),
\end{align*}
where $\cov$ denotes the covariance between two random variables and  and $\sigma$ denotes the standard deviation of a random variable. Similarly, for label-free example importance scores $c^n$:
\begin{align*}
	&r_{\textrm{ex.}}(\f_{e1}, \f_{e2}) = \frac{\cov_{\rx, I} \left[ c^I(\f_{e1}, \rx) , c^I(\f_{e2}, \rx) \right]}{\sigma_{\rx, I}\left[ c^I(\f_{e1}, \rx) \right] \sigma_{\rx, I}\left[ c^I(\f_{e2}, \rx) \right]} \\
	&\rx \sim \textrm{Empirical Distribution}(\Dtest) , I \sim \textrm{Uniform}(\mathcal{J}),
\end{align*}
where $\mathcal{J} \subset [N]$ is the indices of the sampled training examples for which the example importance is computed. Those two Pearson correlation coefficients are the one that we report in Tables~\ref{tab:pretext_features_pearson}~and~\ref{tab:pretext_examples_pearson}.

\paragraph{Supplementary Examples.} To check that the qualitative analysis from Section~\ref{subsec:pretext} extends beyond the examples showed in the main paper, the reader can refer to Figures~\ref{fig:pretext_saliency_appendix}~and~\ref{fig:pretext_example_appendix}.

\subsection{Challenging our assumptions with disentangled VAEs} 
\label{subappendix:vae_complement}

We provide some details for the experiments in Section~\ref{subsec:vaes}.

\begin{table}[h]
	\caption{MNIST Variational Autoencoder Architecture.}
	\label{tab:vae_mnist}
	\begin{center}
		\setstretch{1.5}
		\begin{adjustbox}{width=\columnwidth}
			\begin{tabular}{c|clc} 
				\toprule
				Component & Layer Type & Hyperparameters & Activation Function\\
				\hline
				\multirow{7}*{Encoder} & Conv2d & \makecell[l]{Input Channels:1 ; Output Channels:32 ; Kernel Size:4 ; Stride:2 ; Padding:1 } & ReLU  \\ 
				& Conv2d & \makecell[l]{Input Channels:32 ; Output Channels:32 ; Kernel Size:4 ; Stride:2 ; Padding:1 } & ReLU \\ 
				& Conv2d & \makecell[l]{Input Channels:32 ; Output Channels:32 ; Kernel Size:4 ; Stride:2 ; Padding:1 } & ReLU \\  
				& Flatten & Start Dimension:1 &   \\ 
				& Linear & Input Dimension: 512 ; Output Dimension: 256 & ReLU  \\
				& Linear & Input Dimension: 256 ; Output Dimension: 256 & ReLU \\ 
				& Linear & Input Dimension: 256 ; Output Dimension: 6 & ReLU \\ 
				\hline
				Reparametrization Trick & \multicolumn{3}{l}{\makecell[c]{The output of the encoder contains $\vaemu$ and $\log \vaesigma$. \\ The latent representation is then generated via $\h = \vaemu(\x) + \vaesigma(\x) \odot \noise$, $\noise \sim \mathcal{N}(0, I)$}} \\
				\hline
				\multirow{8}*{Decoder} & Linear & Input Dimension: 3 ; Output Dimension: 256 & ReLU  \\
				& Linear & Input Dimension: 256 ; Output Dimension: 256 & ReLU  \\
				& Linear & Input Dimension: 256 ; Output Dimension: 512 & ReLU  \\
				& Unflatten & Dimension:1 ; Unflatten Size:(32, 4, 4) &   \\
				& ConvTranspose2d & \makecell[l]{Input Channels:32 ; Output Channels:32 ; Kernel Size:4 ; Stride:2 ; Output Padding:1} &  ReLU \\  
				& ConvTranspose2d & \makecell[l]{Input Channels:32 ; Output Channels:32 ; Kernel Size:4 ; Stride:2 ; Output Padding:1} & ReLU  \\
				& ConvTranspose2d & \makecell[l]{Input Channels:32 ; Output Channels:1 ; Kernel Size:4 ; Stride:2 ; Output Padding:1} & Sigmoid  \\
				\bottomrule
			\end{tabular}
		\end{adjustbox}
	\end{center}
\end{table}

\begin{table}[h]
	\setstretch{1.5}
	\caption{dSprites Variational Autoencoder Architecture.}
	\label{tab:vae_dsprites}
	\begin{center}
		\begin{adjustbox}{width=\columnwidth}
			\begin{tabular}{c|clc} 
				\toprule
				Component & Layer Type & Hyperparameters & Activation Function\\
				\hline
				\multirow{7}*{Encoder} & Conv2d & \makecell[l]{Input Channels:1 ; Output Channels:32 ; Kernel Size:4 ; Stride:2 ; Padding:1 } & ReLU  \\ 
				& Conv2d & \makecell[l]{Input Channels:32 ; Output Channels:32 ; Kernel Size:4 ; Stride:2 ; Padding:1 } & ReLU \\ 
				& Conv2d & \makecell[l]{Input Channels:32 ; Output Channels:32 ; Kernel Size:4 ; Stride:2 ; Padding:1 } & ReLU \\ 
				& Conv2d & \makecell[l]{Input Channels:32 ; Output Channels:32 ; Kernel Size:4 ; Stride:2 ; Padding:1 } & ReLU \\  
				& Flatten & Start Dimension:1 &   \\ 
				& Linear & Input Dimension: 512 ; Output Dimension: 256 & ReLU  \\
				& Linear & Input Dimension: 256 ; Output Dimension: 256 & ReLU \\ 
				& Linear & Input Dimension: 256 ; Output Dimension: 12 & ReLU \\ 
				\hline
				Reparametrization Trick & \multicolumn{3}{l}{\makecell[c]{The output of the encoder contains $\vaemu$ and $\log \vaesigma$. \\ The latent representation is then generated via $\h = \vaemu(\x) + \vaesigma(\x) \odot \noise$, $\noise \sim \mathcal{N}(0, I)$}} \\
				\hline
				\multirow{8}*{Decoder} & Linear & Input Dimension: 6 ; Output Dimension: 256 & ReLU  \\
				& Linear & Input Dimension: 256 ; Output Dimension: 256 & ReLU  \\
				& Linear & Input Dimension: 256 ; Output Dimension: 512 & ReLU  \\
				& Unflatten & Dimension:1 ; Unflatten Size:(32, 4, 4) &   \\
				& Conv2d & \makecell[l]{Input Channels:32 ; Output Channels:32 ; Kernel Size:4 ; Stride:2 ; Padding:1 } & ReLU \\  
				& ConvTranspose2d & \makecell[l]{Input Channels:32 ; Output Channels:32 ; Kernel Size:4 ; Stride:2 ; Output Padding:1} &  ReLU \\  
				& ConvTranspose2d & \makecell[l]{Input Channels:32 ; Output Channels:32 ; Kernel Size:4 ; Stride:2 ; Output Padding:1} & ReLU  \\
				& ConvTranspose2d & \makecell[l]{Input Channels:32 ; Output Channels:1 ; Kernel Size:4 ; Stride:2 ; Output Padding:1} & Sigmoid  \\
				\bottomrule
			\end{tabular}
		\end{adjustbox}
	\end{center}
\end{table}

\paragraph{Model.} The architecture of the MNIST VAE is described in Table~\ref{tab:vae_mnist} and those of the dSprites VAE is described in Table~\ref{tab:vae_dsprites}. Both of these architectures are reproductions of the VAEs from \citet{Burgess2018}. The $\beta$-VAE is trained to minimize the objective $L_{\beta}(\x, \theta, \phi) = \E_{q_{\phi}(\h \mid \x)} \left[ \log p_{\theta}(\x \mid \h)\right] - \beta D_{\textrm{KL}} \left[ q_{\phi}(\h \mid \x)  \mid \mid p(\h) \right]$, where $q_{\phi}(\h \mid \x)$ is the distribution underlying the reparametrized encoder output, $p_{\theta}(\x \mid \h)$ is the distribution underlying the decoder output, $p(\h)$ is the density associated to isotropic unit Gaussian $\mathcal{N}(\boldsymbol{0}, \boldsymbol{I})$ underlying $\noise$ and  $D_{\textrm{KL}}$ is the KL-divergence. The objective of the TC-VAE is th same as in~\cite{Chen2018}. We refer the reader to the original paper for the details. All the VAEs  are trained to minimize their objective for 100 epochs with patience 10 by using Pytorch's Adam with hyperparameters: $ \textrm{learning rate} = .001, \beta_1 = .9, \beta_2 = .999, \epsilon = 10^{-8}, \textrm{weight decay} = 10^{-5}$.  

\paragraph{Feature Importance.} As a baseline for the feature importance methods, we use a black image $\bar{\x} = \boldsymbol{0}$.

\paragraph{Metrics.} We use the Pearson coefficient to measure the correlation between two importance scores given a random test example and a random feature. In this case, one correlation coefficient can be computed for each couple $(i,j) \in [d_H]^2$ of latent units:
\begin{align*}
	&r_{ij} = \frac{\cov_{\rx, I} \left[ a_I(\mu_i, \rx) , a_I(\mu_j, \rx) \right]}{\sigma_{\rx, I}\left[ a_I(\mu_i, \rx) \right] \sigma_{\rx, I}\left[ a_I(\mu_j, \rx) \right]} \\
	&\rx \sim \textrm{Empirical Distribution}(\Dtest) , I \sim \textrm{Uniform}([d_X]),
\end{align*}
where $\mu_i$ is the $i$-th component of the expected representation computed by the encoder for all $i \in [d_H]$. To have an overall measure of correlation between the VAE units, we sum over all pairs of distinct latent units:
\begin{align*}
	r = \frac{1}{d_H (d_H -1)} \sum_{i=1 , i\neq j}^{d_H} \sum_{j=1}^{d_H} r_{ij}
\end{align*}
This averaged correlation coefficient is the one that we report in Figure~\ref{fig:vae_pearson}. In our quantitative analysis, we also report the Spearman rank correlation between $\beta$ and $r$. Concretely, this is done by performing the experiment $M \in \N^*$ times for different values $\beta_1 , \dots , \beta_M$ of $\beta$. We then measure the correlation coefficients $r_1 , \dots , r_M$ associated to each experiment. The Spearman rank correlation coefficient can be computed form this data:
\begin{align*}
	&\rho = \frac{\cov_{M_1 , M_2} \left[ \rank(r_{M_1}) , \rank(\beta_{M_2})\right]}{\sigma_{M_1}[\rank(r_{M_1})] \sigma_{M_2}[\rank(\beta_{M_2})]}\\
	& M_1 , M_2 \sim \textrm{Uniform}([M]).
\end{align*}
This coefficient $\rho$ ranges from $-1$ to $1$, where $\rho = -1$ corresponds to a perfect monotonically decreasing relation, $\rho = 0$ corresponds to the absence of monotonic relation and $\rho = 1$ corresponds to a perfect monotonically increasing relation.

\paragraph{Entropy.} To complete our quantitative analysis of the VAEs, we introduce a new metric called \emph{entropy}. The purpose of this metric is to measure how the saliency for each feature is distributed across the different latent units. In particular, we would like to be able to distinguish the case where all latent units are sensitive to a feature and the case where only one latent unit is sensitive to a feature. As we have done previously, we can compute the importance score $a_i(\mu_j, \x)$ of each feature $x_i$ from $\x \in \X$ for a latent unit $j \in [d_H]$. For each latent unit $j \in [d_H]$, we define the proportion of attribution as
\begin{align*}
	p_{j}(i, \x) = \frac{|a_i(\mu_j , \x)|}{\sum_{k=1}^{d_H} |a_i(\mu_k , \x)|}.
\end{align*}
This corresponds to the fraction of the importance score attributed to $j$ for feature $i$ and example $\x$. Note that this quantity is well defined if at least one of the $a_i(\mu_k , \x), k \in [d_H]$ is non-vanishing. Hence, we only consider the features $i \in [d_X]$ that are salient for at least one latent unit. We can easily check that $\sum_{j=1}^{d_H} p_{j}(i, \x) = 1$ by construction. This means that the proportions of attributions can be interpreted as probabilities of saliency. This allows us to define an entropy that summarizes the distribution over the latent units:
\begin{align*}
	S(i, \x) = - \sum_{j=1}^{d_H} p_{j}(i, \x) \ln p_{j}(i, \x).
\end{align*}
This entropy is analogous to Shannon's entropy~\cite{Shannon1948}. It can be checked easily that this entropy is minimal ($S^{\textrm{min}} = 0$) whenever only one latent unit $j \in [d_H]$ is sensitive to feature $i$: $p_{j}(i, \x) = 1$. Conversely, it is well known~\cite{Cover2005} that the entropy is maximal ($S^{\textrm{max}} = \ln d_H$) whenever the distribution is uniform over the latent units: $p_{j}(i, \x) = \nicefrac{1}{d_H}$ for all $j \in [d_H]$. In short: the entropy is low when mostly one latent unit is sensitive to the feature of interest and high when several latent units are sensitive to the feature of interest. Clearly, the former situation is more desirable if we want to distinguish the different latent units. For each VAE, we evaluate the average entropy
\begin{align*}
	&S = \E_{\rx , I} \left[ S(I, \rx) \right]\\
	&\rx \sim \textrm{Empirical Distribution}(\Dtest) , I \sim \textrm{Uniform}[\mathcal{I}(\rx)], 
\end{align*}  
where $\mathcal{I}(\x) = \{ i \in [d_X] \mid a_i(\mu_k , \x) \neq 0 \textrm{ for at least one } k \in [d_H]\}$ is the set of features that are salient for at least one latent unit. We measure the average entropy for each VAE and report the results as a function of $\beta$ in Figure~\ref{fig:vae_entropy}.

\begin{figure*}[h] 
	\centering
	\subfigure[MNIST]{
		\begin{minipage}[t]{0.5\linewidth}
			\centering
			\includegraphics[width=\linewidth]{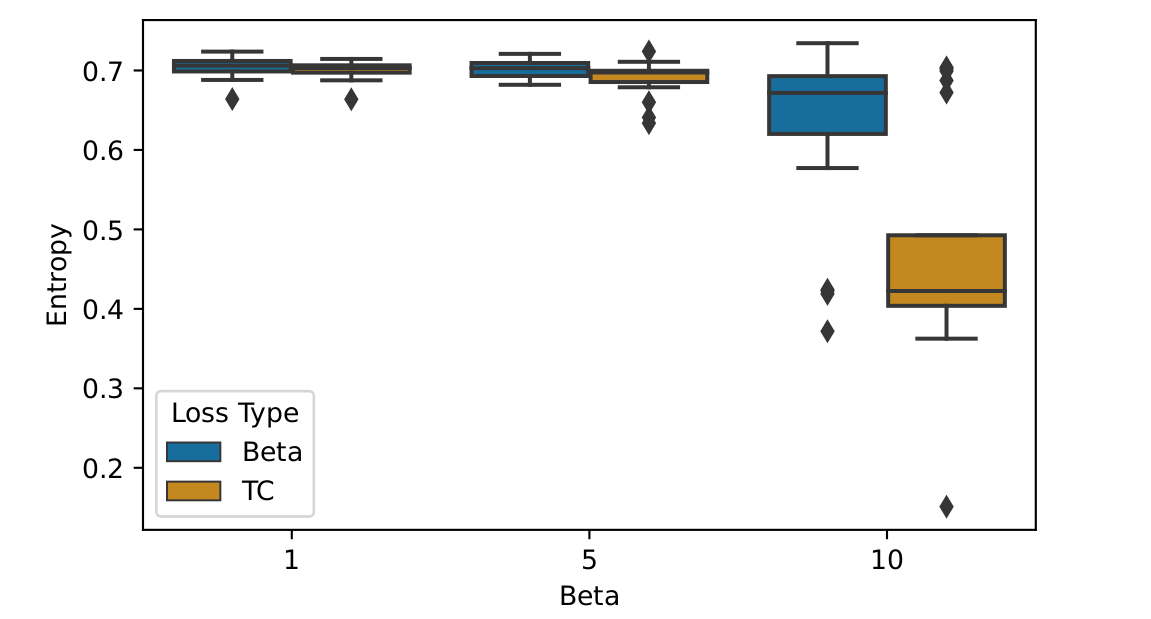}
		\end{minipage}
	}%
	\subfigure[dSprites]{
		\begin{minipage}[t]{0.5\linewidth}
			\centering
			\includegraphics[width=\linewidth]{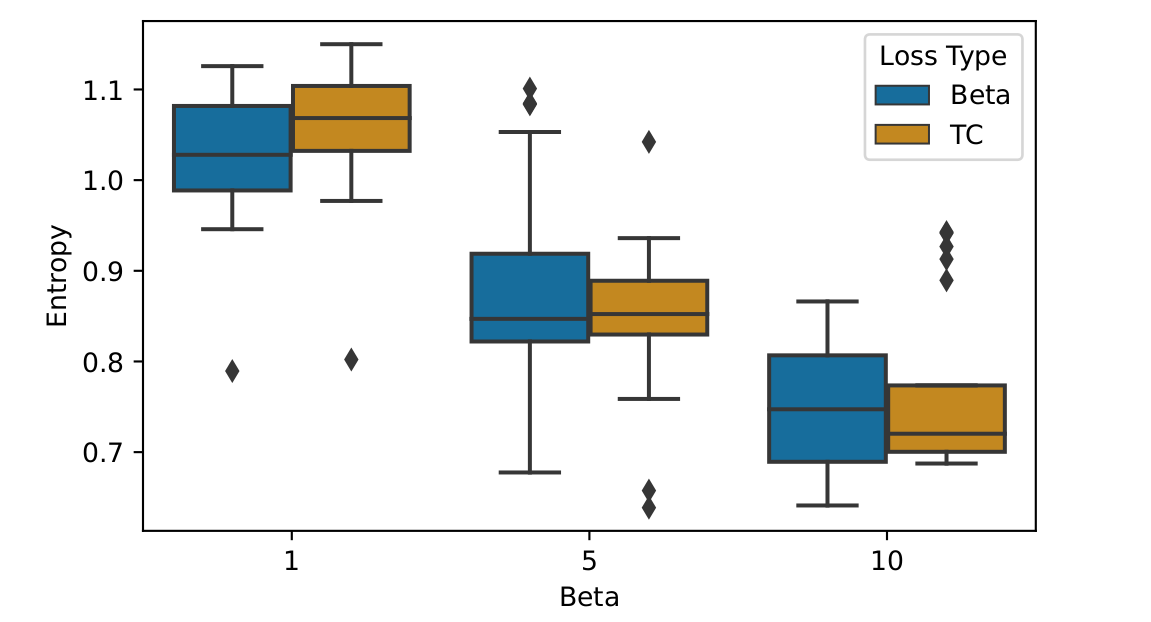}
		\end{minipage}
	}
	\caption{Entropy of saliency maps for different values of $\beta$.}
	\label{fig:vae_entropy}	
\end{figure*}

We clearly see that the entropy decreases as the disentanglement strength $\beta$ increases for both MNIST (Spearman $\rho = -.56$) an dSprites (Spearman $\rho = -.76$). This means that disentangling has the effect of distributing the saliency over fewer units. This brings a nice complement to the quantitative analysis that we have conducted in Section~\ref{subsec:vaes}: although increasing disentanglement does not make the latent units focus on different parts of the image (since the correlation does not decrease significantly), it does decrease the number of latent units that are simultaneously sensitive to a given part of the image (since the entropy decreases substantially). These two phenomena are not incompatible with each other. For instance, we see that the 6-th latent unit seems inactive in comparison with the other latent units in Figure~\ref{fig:vae_disprites_appendix}. In fact, this latent unit might perfectly pay attention to the same parts of the image as the other units and, hence, be correlated. What distinguishes this unit from the others is that the feature importance scores on its saliency map are significantly smaller (we cannot appreciate it by plotting the saliency maps on the same scale) and, hence, reduces the entropy. Finally, we note that the entropies from Figure~\ref{fig:vae_entropy} remain fairly close to their maximal value ($S^{max} = \ln 3 \approx 1.1$ for MNIST and $S^{max} = \ln 6 \approx 1.8$ for dSprites). This means that the VAEs have several active units for each pixel.

\paragraph{Supplementary Examples.} To check that the qualitative analysis from Section~\ref{subsec:vaes} extends beyond the examples showed in the main paper, the reader can refer to Figures~\ref{fig:vae_mnist_appendix}~and~\ref{fig:vae_disprites_appendix}. These saliency maps are produced with the main paper's VAEs. We also plot saliency maps for vanilla ($\beta = 1$) VAEs in Figures~\ref{fig:vanilla_vae_mnist}~and~\ref{fig:vanilla_vae_dsprites}. The issues mentioned in Section~\ref{subsec:vaes} are still present in this case.

\FloatBarrier
\begin{figure} 
	\centering
	\includegraphics[height=\textheight]{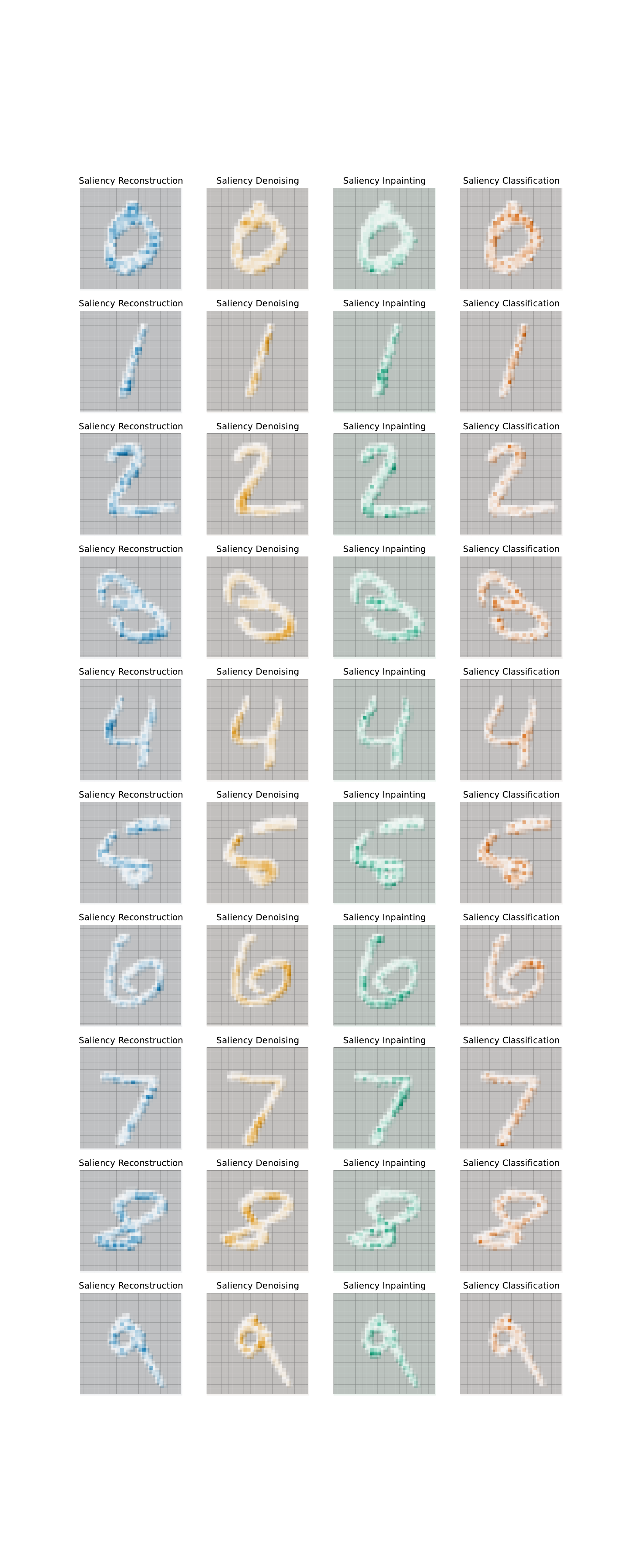}
	\vspace{-.5in}
	\caption{Label-free saliency for various pretext tasks.}
	\label{fig:pretext_saliency_appendix}
\end{figure}

\begin{figure} 
	\centering
	\includegraphics[height=\textheight]{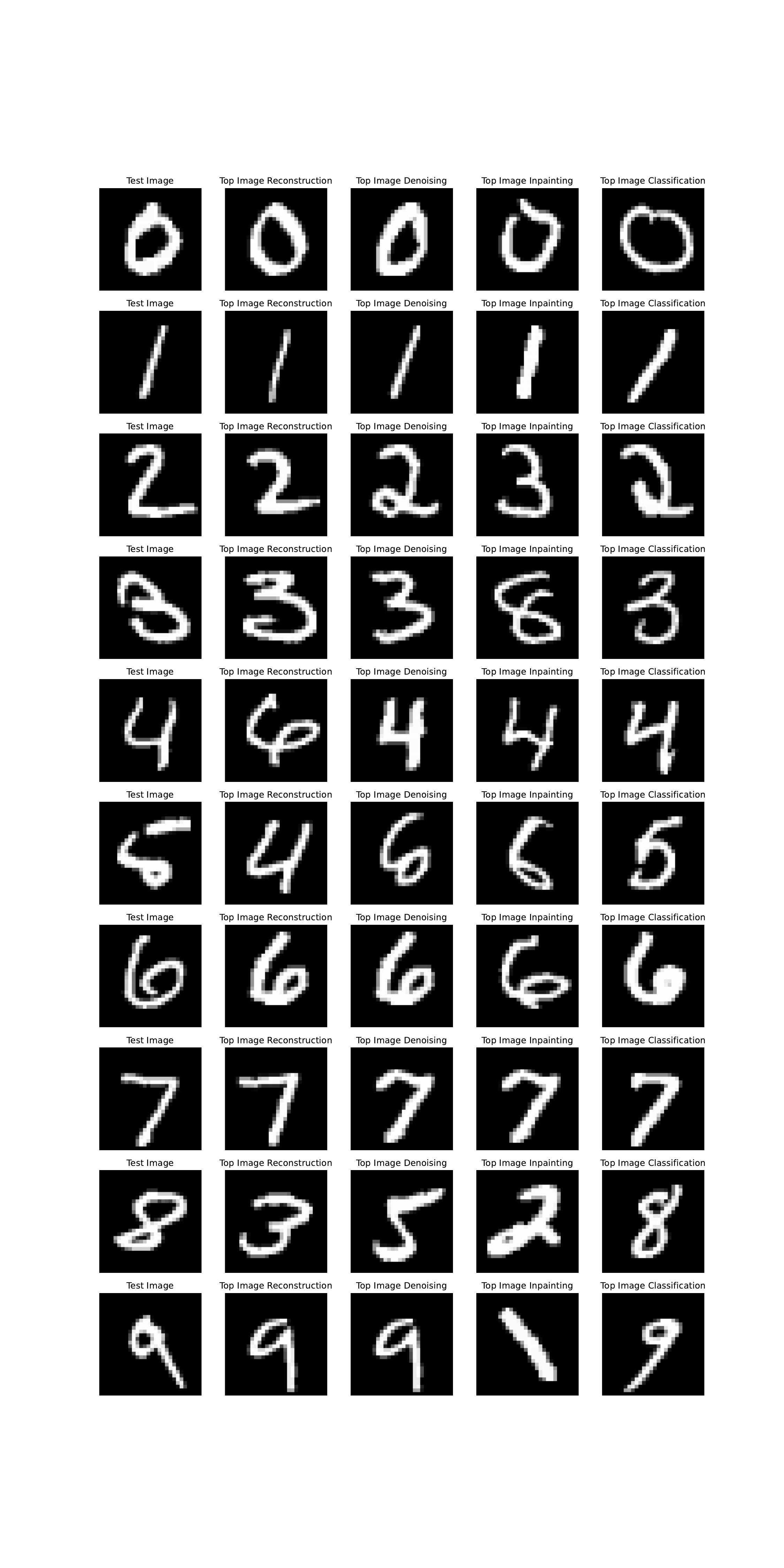}
	\vspace{-.5in}
	\caption{Label-free top example for various pretext tasks.}
	\label{fig:pretext_example_appendix}
\end{figure}

\begin{figure} 
	\centering
	\includegraphics[height=\textheight]{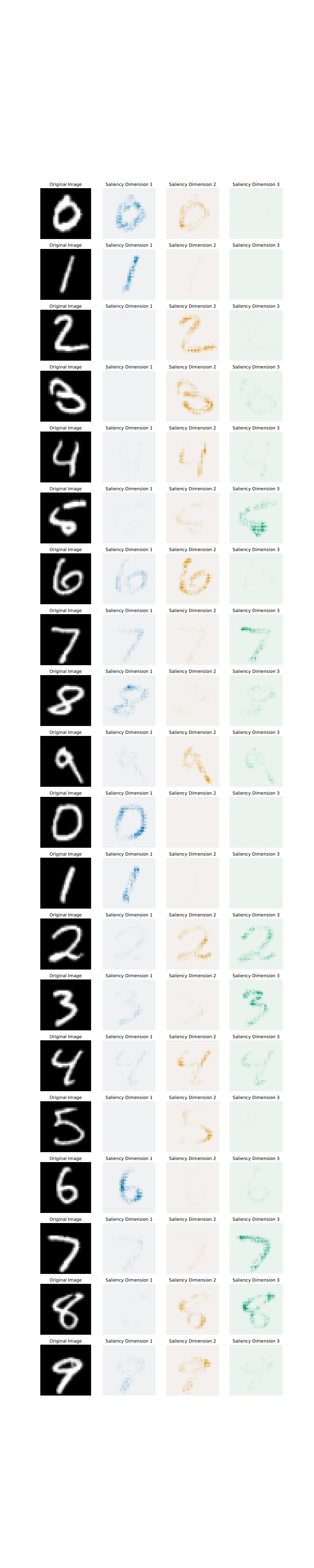}
	\vspace{-.5in}
	\caption{Saliency maps for the latent units of a MNIST VAE.}
	\label{fig:vae_mnist_appendix}
\end{figure}

\begin{figure} 
	\centering
	\includegraphics[height=\textheight]{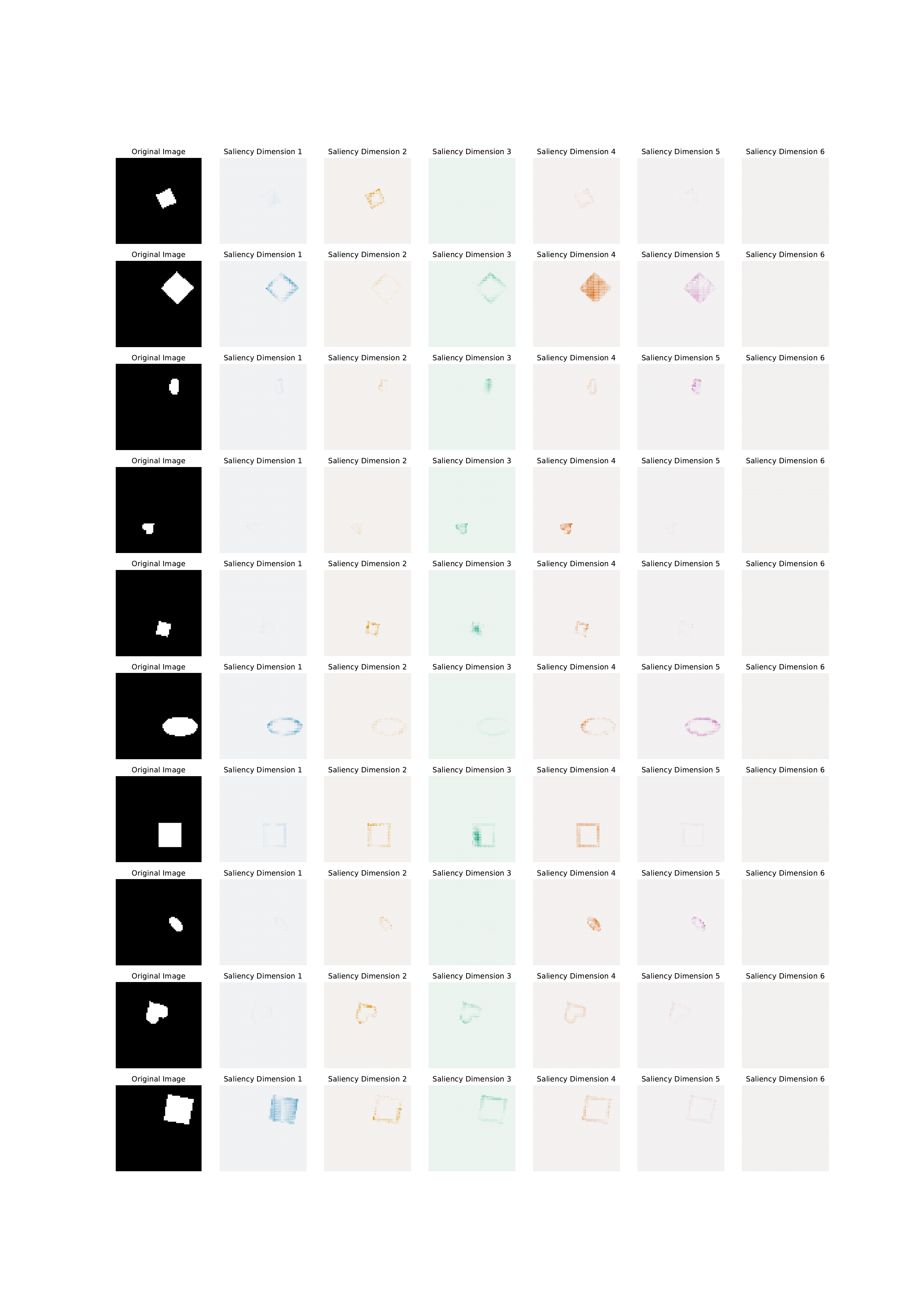}
	\vspace{-.5in}
	\caption{Saliency maps for the latent units of a dSprites VAE.}
	\label{fig:vae_disprites_appendix}
\end{figure}

\begin{figure} 
	\centering
	\includegraphics[height=\textheight]{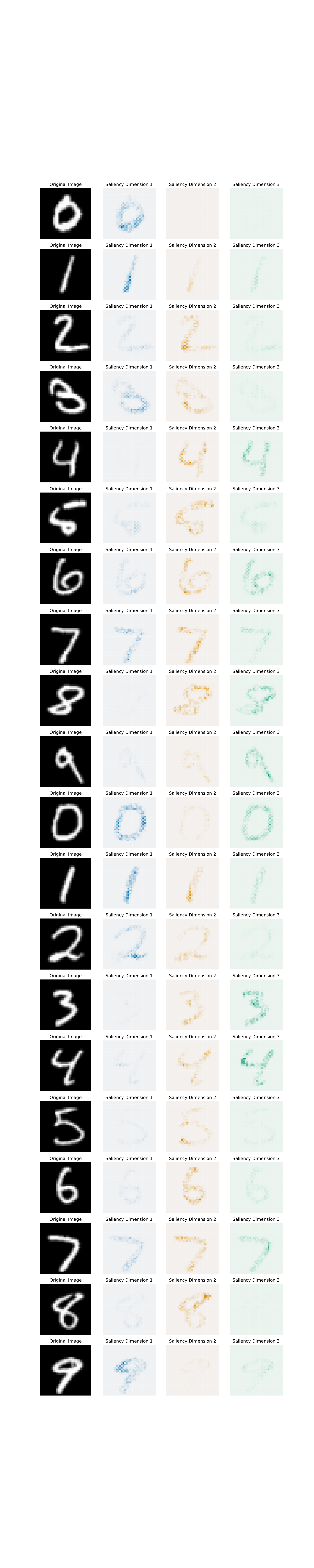}
	\vspace{-.5in}
	\caption{Saliency maps for the latent units of a vanilla ($\beta = 1$) MNIST VAE.}
	\label{fig:vanilla_vae_mnist}
\end{figure}

\begin{figure} 
	\centering
	\includegraphics[height=\textheight]{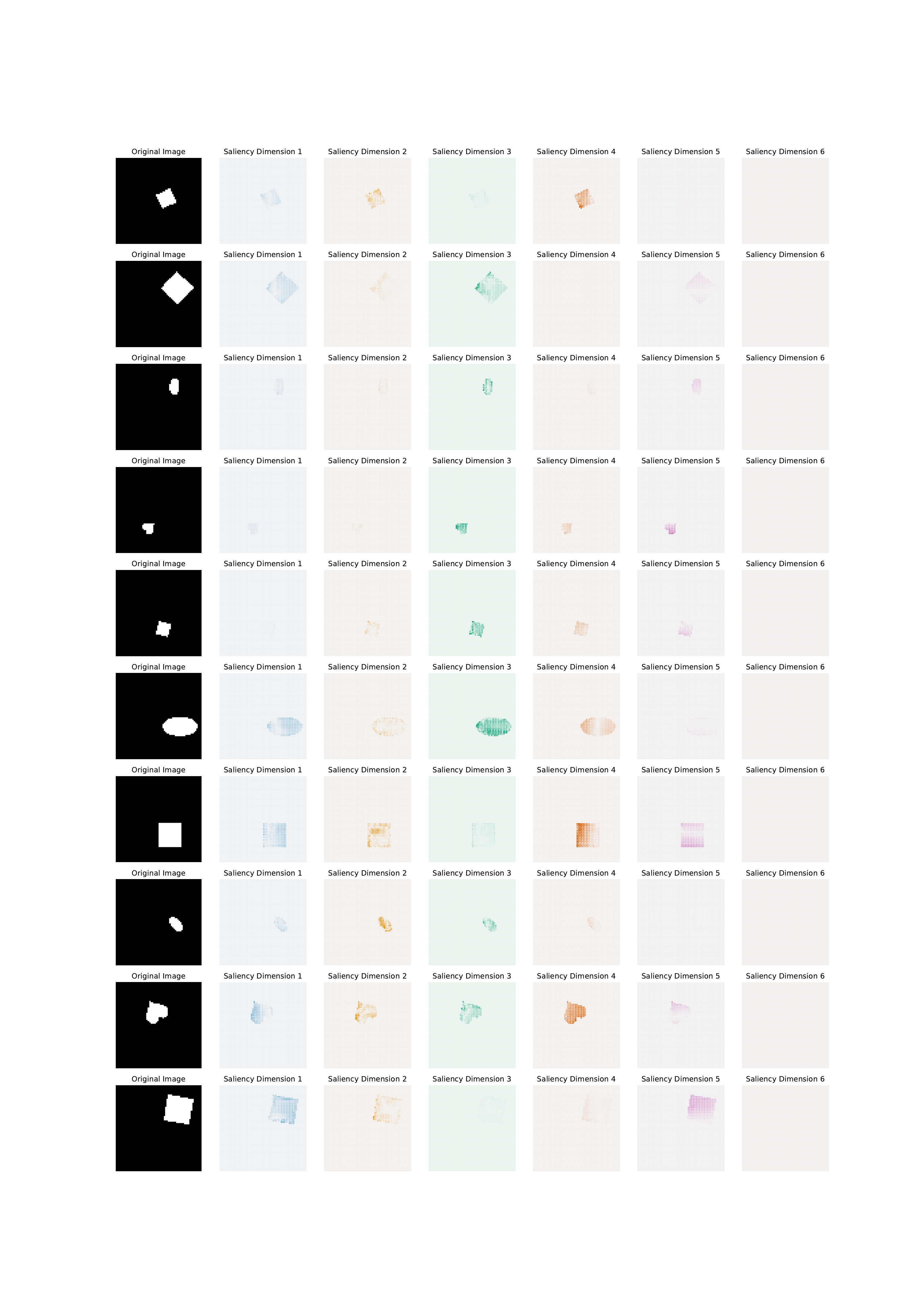}
	\vspace{-.5in}
	\caption{Saliency maps for the latent units of a vanilla ($\beta = 1$) dSprites VAE.}
	\label{fig:vanilla_vae_dsprites}
\end{figure}

\end{document}